\documentclass[conference]{IEEEtran}
\IEEEoverridecommandlockouts

\usepackage[american]{babel}

\usepackage{mathtools}
\usepackage{booktabs} 
\usepackage{algorithm}
\usepackage[switch]{lineno}
\usepackage{hyperref}
\usepackage{breakurl}

\usepackage{microtype}
\usepackage{hyperref}
\usepackage{amsmath}
\usepackage{amsthm}
\usepackage{amssymb}
\usepackage{setspace}
\usepackage{adjustbox}
\usepackage{relsize}
\usepackage{multirow}
\usepackage{multicol}
\usepackage{acronym}
\usepackage{nccmath}
\usepackage{psfrag}
\usepackage{subcaption}
\usepackage{float}
\usepackage{dsfont}
\usepackage{algpseudocode}
\usepackage[font=small]{caption}

\newtheorem{theorem}{Theorem}

\usepackage{natbib}
\bibliographystyle{plainnat}
    
\usepackage{amsfonts}
\usepackage{graphicx}
\usepackage{textcomp}
\usepackage{xcolor}

\newcommand\V[1]  { \mathbf{#1} }
\newcommand\B[1]  { \boldsymbol{#1} }

\newcommand\set[1] {\mathcal{#1}}

\newcommand\up[1] {\mathrm{#1}}

\acrodef{RFF}[RFF]{random Fourier features}
\acrodef{SS}[SS]{stochastic subgradient}
\acrodef{RRM}[RRM]{robust risk minimization}
\acrodef{ERM}[ERM]{empirical risk minimization}
\acrodef{MRC}[MRC]{minimax risk classifier}
\acrodef{SVM}[SVM]{support vector machine}
\acrodef{LP}[LP]{linear program}
\acrodef{LR}[LR]{logistic regression}
\acrodef{SOA}[SOA]{state-of-the-art}

\def\BibTeX{{\rm B\kern-.05em{\sc i\kern-.025em b}\kern-.08em
    T\kern-.1667em\lower.7ex\hbox{E}\kern-.125emX}}

\begin{document}
\title{Efficient Large-Scale Learning of \\ Minimax Risk Classifiers
}

\author{
\IEEEauthorblockN{
Kartheek Bondugula$^{1}$,
Santiago Mazuelas$^{1,2}$,
Aritz P{\'e}rez$^{1}$
}
\IEEEauthorblockA{
$^{1}$ Basque Center for Applied Mathematics (BCAM), Bilbao, Spain \\
$^{2}$ IKERBASQUE, Basque Foundation for Science \\
\texttt{\{kbondugula, smazuelas, aperez\}@bcamath.org}
}
}


\maketitle

\begin{abstract}
Supervised learning with large-scale data usually leads to complex optimization problems, especially for classification tasks with multiple classes.
Stochastic subgradient methods can enable efficient learning with a large number of samples for classification techniques that minimize the average loss over the training samples. However, recent techniques, such as \acp{MRC}, minimize the maximum expected loss and are not amenable to stochastic subgradient methods. In this paper, we present a learning algorithm based on the combination of constraint and column generation that enables efficient learning of \acp{MRC} with large-scale data for classification tasks with multiple classes. Experiments on multiple benchmark datasets show that the proposed algorithm provides upto a 10x speedup for general large-scale data and around a 100x speedup with a sizeable number of classes.
\end{abstract}

\begin{IEEEkeywords}
Large-scale learning, multi-class classification, robust risk minimization, minimax risk classifiers, constraint generation
\end{IEEEkeywords}
\section{Introduction}

Large-scale data is common in multiple machine learning tasks, such as classification of handwritten digits, or sentiment analysis of text reviews \citep{akata2013good, wang2020survey}. Such data is composed by a large amount of training samples that are often represented as a high-dimensional vector of features \citep{li2010object}. Supervised learning with large-scale data usually leads to complex optimization problems specially for classification tasks with multiple classes \cite{bottou2007tradeoffs, bengio2010label, wang2020survey}. 

\Acf{SS} methods can enable efficient learning for a large number of samples \citep{shalev2007pegasos, bottou2010large, YuaGuo:12, kawaguchi2020ordered}. These methods are amenable for classification techniques based on the \ac{ERM} approach \citep{MehRos:18} that minimizes the average risk over the training samples. Recent techniques, such as \acfp{MRC} \citep{MazZanPer:20, MazRomGrun:23}, are based on the \ac{RRM} approach that minimizes the maximum expected loss over the training samples and can provide performance guarantees in terms of worst-case error probability. Such techniques cannot leverage \ac{SS} since the objective function is given by a maximization instead of an average over samples.

Efficient learning with a large number of features is often achieved by exploiting the parameters' sparsity induced by regularization methods \citep{singer2009efficient, tan2010learning, li2010object, MigKrzLu:20, CelMon:22}. In particular, recent methods have leveraged the parameters' sparsity using constraint generation approaches for linear optimization \citep{DedAntEtal:22, BonMazPer:23}. Specifically, the methods in \cite{DedAntEtal:22} have enabled the efficient learning of \acp{SVM} with L1-penalization for a large number of features and/or instances. In addition, the methods in \mbox{\cite{BonMazPer:23}} have enabled the efficient learning of \acp{MRC} with L1-penalization for a large number of features. However, the methods in \cite{DedAntEtal:22} are proposed for binary classification problems and those in \cite{BonMazPer:23} only provide efficient learning with a small number of samples and classes. Specifically, the complexity per iteration in \cite{BonMazPer:23} increases cubically with the number of samples and exponentially with the number of classes.

In this paper, we present a learning algorithm for \acp{MRC} that enables efficient learning with large-scale data and multiple classes. Specifically, the main contributions of the paper are as follows.
\begin{itemize}
    \item We present an algorithmic framework based on the combination of constraint and column generation methods that enables the efficient learning of \acp{MRC} in general large-scale scenarios.
    \item The presented algorithm utilizes a greedy approach for constraint selection at each iteration that results in a complexity that scale quasi-linearly with the number of classes.
    \item For multiple scenarios of large-scale data, we present theoretical results that describe the convergence of the algorithm proposed.
    \item Using multiple benchmark datasets, we experimentally show that the proposed algorithm enables efficient learning of \acp{MRC} with large-scale data and multiple labels.
\end{itemize}

\textbf{Notations:} For a set $\set{S}$, we denote its cardinality as $|\set{S}|$; bold lowercase and uppercase letters represent vectors and matrices, respectively; for a matrix $\V{F}$, and set of indices $\set{I}$ and $\set{J}$, $\V{F}_{\set{I}, \set{J}}$ denotes the submatrix obtained by the rows and columns corresponding to the indices in $\set{I}$ and $\set{J}$, respectively; for a vector $\V{b}$ and set of indices $\set{I}$, $\V{b}_{\set{I}}$ denotes the subvector obtained by the components corresponding to the indices in $\set{I}$; for a vector $\V{b}$ and an index $i$, $\up{b}_i$ denotes the component at index $i$; $\V{I}$ denotes the identity matrix; $\mathds{1} \{\cdot\}$ denotes the indicator function; $\B{1}$ denotes a vector of ones; for a vector $\V{v}$, $|\V{v}|$ and $(\V{v})_+$, denote its component-wise absolute value and positive part, respectively; $\|\cdot\|_1$ denotes the 1-norm of its argument; $\otimes$ denotes the Kronecker product; $\preceq$ and $\succeq$ denote vector inequalities; $\mathbb{E}_{\up{p}}\{\,\cdot\,\}$ denotes the expectation of its argument with respect to distribution $\up{p}$; and $\V{e}_{i}$ denotes the $i$-th vector in a standard basis. For an integer $a$, we use $[a]$ to denote the set $\{1,2,\ldots,a\}$.
\section{Preliminaries and related work}
In this section, we describe the setting addressed in the paper, and the MRC methods that minimize the worst-case error probability.

\vspace{-0.1cm}
\subsection{Problem formulation}
\vspace{-0.1cm}
Supervised classification uses sample-label pairs to determine classification rules that assign labels to samples. We denote by $\mathcal{X}$ and $\mathcal{Y}$ the sets of samples and labels, respectively, with $\mathcal{Y}$ represented by $\{1, 2, \ldots, |\mathcal{Y}|\}$. We denote by $\Delta(\mathcal{X}\times\mathcal{Y})$ the set of probability distributions on $\mathcal{X} \times \mathcal{Y}$ and by $\ell(\up{h}, \up{p})$ the expected 0-1 loss of  the classification rule $\up{h}$ with respect to distribution $\up{p} \in \Delta(\mathcal{X}\times\mathcal{Y})$, i.e., $\ell(\up{h},\up{p})=\mathbb{E}_{\up{p}} \mathds{1}\{\up{h}(x)\neq y\}$.

Sample-label pairs can be embedded into real-valued vectors using a feature map $\Phi: \set{X}\times \set{Y} \rightarrow \mathbb{R}^m$.  The most common approach to construct such a mapping involves combining multiple features over the samples with one-hot encodings of the labels as follows (see e.g., \cite{MehRos:18})
\begin{align}
\label{eq:feature}
 \Phi(x, y)  =  \big{[}\mathds{1}\{y = 1\} {\Psi}(x)^{\text{T}},  \ldots, \mathds{1}\{y = |\mathcal{Y}|\} {\Psi}(x)^{\text{T}}\big{]}^{\text{T}}\nonumber
\end{align}
 where the map $\Psi : \mathcal{X} \rightarrow \mathbb{R}^d$ represents samples as real vectors of size $d$, for example, using random Fourier features \citep{RahRec:08}.
 

In this paper we consider scenarios in which the number of samples $n$, number of classes $|\set{Y}|$, and features' dimensionality $m = d|\set{Y}|$ are large, and propose efficient learning methods for the \acp{MRC}, which are briefly described in the following.

\subsection{Minimax Risk Classifiers}
\vspace{-0.1cm}
Minimax risk classifiers (MRCs) are classification rules that minimize the worst-case error probability over distributions in an uncertainty set \citep{MazZanPer:20, MazSheYua:22, MazRomGrun:23}. Specifically, such rules are solutions to the minimax risk problem defined as
\begin{equation}
\label{eq:minmaxrisk}
\up{R}^{*} = \underset{\up{h}}{\min} \, \underset{\up{p} \in \mathcal{U}}{\max} \; \ell(\up{h}, \up{p})
\end{equation}
where $\up{R}^{*}$ is the worst-case error probability and $\set{U}$ is an uncertainty set of distributions determined by expectation estimates as
\begin{equation}
\label{eq:us}
\mathcal{U} = \{\up{p} \in \Delta (\mathcal{X} \times \mathcal{Y}) : \left|\mathbb{E}_{\up{p}}\{\Phi(x, y)\}  - \B{\tau} \right| \preceq \B{\lambda}\}.
 \end{equation}
The mean vector $\B{\tau}$ denotes expectation estimates corresponding with the feature mapping $\Phi: \set{X}\times \set{Y} \rightarrow \mathbb{R}^m$, and the confidence vector $\B{\lambda} \succeq \V{0}$ accounts for inaccuracies in the estimate. The mean and confidence vectors can be obtained from the training samples $\{(x_i,y_i)\}_{i=1}^n$ as $\B{\tau}=\frac{1}{n}\sum_{i=1}^{n}\Phi(x_i,y_i)$, \mbox{$\B{\lambda}= \lambda_0\V{s}$},
where $\V{s}$ denotes the vector formed by the component-wise sample standard deviations of $\{\Phi(x_i,y_i)\}_{i=1}^n$ and $\lambda_0$ is a regularization parameter. 

As described in  \cite{MazZanPer:20, MazSheYua:22, MazRomGrun:23} using the expected 0-1 loss, the MRC rule solution of~\eqref{eq:minmaxrisk} is given by a linear combination of the feature mapping. Specifically, the minimax rule assigns labels as $\arg \max_{y \in \set{Y}} {\Phi}(x,y)^{\text{T}} \B{\mu}^*$. The vector $\B{\mu}^*$ that determines the \mbox{0-1} \ac{MRC} rule corresponding to \eqref{eq:minmaxrisk} is obtained by solving the convex optimization problem \citep{MazZanPer:20,MazRomGrun:23}
\begin{equation}
\label{eq:mrc}
\underset{\B{\mu} \in \mathbb{R}^m}{\min} \; 1 - \B{\tau}^{\text{T}} \B{\mu} + \varphi(\B{\mu}) + \B{\lambda}^{\text{T}} | \B{\mu} |, 
\end{equation}
with $\varphi(\B{\mu}) = {\underset{x \in \mathcal{X}, \set{C} \subseteq \mathcal{Y}}{\max}} (\sum_{y \in \set{C}}\Phi(x, y)^{\text{T}}\B{\mu} - 1) / |\set{C}|$. In addition, the minimum value of \eqref{eq:mrc} equals the minimax risk value of \eqref{eq:minmaxrisk}. The minimax risk value provides an upper bound on the expected classification error when the underlying distribution of the training samples is included in the uncertainty set.

The convex optimization problem \eqref{eq:mrc} for \acp{MRC} includes an L1-penalty $\B{\lambda}^{\up{T}}|\B{\mu}|$, which induces sparsity in the coefficients $\B{\mu}^{*}$ associated with the feature mapping. This sparsity indicates that only a subset of features is necessary to achieve the optimal worst-case error probability. Consequently, efficient learning can be realized by focusing on the most relevant features when solving the \ac{MRC} optimization problem.

\vspace{-0.1cm}
\subsection{LP formulation of MRCs}
\vspace{-0.1cm}
\label{sec:mrc_lp}
The convex optimization problem \eqref{eq:mrc} of \mbox{0-1} \acp{MRC} can be formulated as the following \ac{LP} \citep{BonMazPer:23}
\begin{align}
\label{eq:mrc_linear_primal}
\arraycolsep=1pt\def\arraystretch{1}
\begin{array}{ccc}
\mathcal{P}: & \underset{\B{\mu}_1, \B{\mu}_2, \nu}{\min} & - (\B{\tau} - \B{\lambda})^{\up{T}}\B{\mu}_1 + (\B{\tau} + \B{\lambda})^{\up{T}}\B{\mu}_2 + \nu \\ 
 & \text{s.t.} & \B{\up{F}}(\B{\mu}_1 - \B{\mu}_2) - \nu\B{1} \preceq \B{\up{b}}, \ \B{\mu}_1, \B{\mu}_2 \succeq 0.
\end{array}
\end{align}
The matrix $\V{F}$ and vector $\V{b}$ that define the constraints can be represented as 
\begin{align}
\label{eq:F_submat}
\V{F} = \begin{bmatrix}
\V{F}^{x_1}\\
\V{F}^{x_2}\\
\vdots\\
\V{F}^{x_n}
\end{bmatrix}
, \ \V{b} =\begin{bmatrix}
\hat{\V{b}}\\
\hat{\V{b}}\\
\vdots\\
\hat{\V{b}}
\end{bmatrix}
\end{align}
such that a pair of submatrix $\V{F}^{x_i}$ and subvector $\hat{\V{b}}$ defines all the constraints corresponding to each training instance $x_i$, for $i=1,2,\ldots,n$.
Each row of the submatrix $\V{F}^{x_i}$ and component of subvector $\hat{\V{b}}$ corresponds to a non-empty subset $\set{C}\subseteq \set{Y}$, and is defined as $\sum_{y \in \mathcal{C}}\Phi\left(x_i,y\right)^{\up{T}} / |\mathcal{C}|$ and $1 / \mathcal{|C|}-1$, respectively, for a given $x_i \in \set{X}$. The number of variables $q=2m+1$ in \eqref{eq:mrc_linear_primal} is given by the number of features $m$ while the number of constraints $p = n(2^{|\set{Y}|} - 1)$ in \eqref{eq:mrc_linear_primal} is given by the number of training samples $n$ and the non-empty subsets of $\set{Y}$. Optimization problem \eqref{eq:mrc} can be solved by addressing the \ac{LP} in \eqref{eq:mrc_linear_primal} at learning because the optimal values of \eqref{eq:mrc} and \eqref{eq:mrc_linear_primal} coincide, and a solution of \eqref{eq:mrc} $\B{\mu}^*$ can be obtained as $\B{\mu}^* = \B{\mu}_1^* - \B{\mu}_2^*$ for $\B{\mu}_1^*$ and $\B{\mu}_2^*$ solution of \eqref{eq:mrc_linear_primal}.




The lagrange dual of \eqref{eq:mrc_linear_primal} is given by
\begin{align}
\label{eq:mrc_linear_dual}
\arraycolsep=1pt\def\arraystretch{1}
\begin{array}{ccc}
\mathcal{D}: & \underset{\B{\alpha}}{\max} & -\B{\up{b}}^{\up{T}}\B{\alpha} \\
& \text{s.t.} & \B{\tau} - \B{\lambda} \ \preceq \ \B{\up{F}}^{\up{T}}\B{\alpha} \ \preceq  \ \B{\tau} + \B{\lambda} \\
& & \B{1}^{\up{T}}\B{\alpha} = 1, \ \B{\alpha} \succeq 0.
\end{array}
\end{align}
\vspace{-0.1cm}
The number of variables in the dual \ac{LP} in \eqref{eq:mrc_linear_dual} is equal to the number of constraints in the primal, that is, $p$, while the number of constraints in \eqref{eq:mrc_linear_dual} is equal to the number of variables in the primal, that is, $q$. 

The complexity of \ac{MRC} learning given by such \ac{LP} formulation is not affordable when either the number of samples $n$, the number of features $m$, or the number of classes $|\set{Y}|$ are large. In particular, the number of variables and constraints in \eqref{eq:mrc_linear_primal} grow linearly with $m$ and $n$, respectively, while the number of constraints in \eqref{eq:mrc_linear_primal} grows exponentially with $|\set{Y}|$. In the following, we describe the general constraint generation methodology that we use in the remaining of the paper to provide an efficient learning algorithm for large-scale \acp{MRC}.


\subsection{Constraint Generation for Linear Programming}
\vspace{-0.1cm}
\label{sec:constraint_generation}
Constraint generation methods can enable to efficiently solve large-scale \acp{LP} when the number of constraints is large \citep{DimTsi97, DesJac}. The basic idea is to start with a candidate set of constraints and iteratively add new constraints until the resulting solution is feasible for the whole problem. Such procedure is effective when the number of variables is significantly smaller than the number of constraints since only a small set of constraints will be active in the solution. Moreover, this procedure can also be used for large number of variables by considering the corresponding dual \ac{LP}. Since generating (adding) constraints in the dual is equivalent to adding variables in the primal, such method is also known as column generation. In the following, we use constraint and column generation tools to provide a learning algorithm for the \ac{LP} formulation of \acp{MRC} that is efficient for large-scale data and multi-class classification.

\section{Efficient Large-Scale Learning of MRCs}
In this section, we propose a learning algorithm based on the combination of constraint and column generation for the \ac{LP} formulation of \mbox{0-1} \acp{MRC}. We present implementation details along with theoretical guarantees for multiple scenarios such as large number of samples but few features, and large number of samples with many features. In addition, we describe efficient techniques for learning with multiple classes.

\subsection{General Algorithm for Efficient Learning}
\label{sec:general}
In the following, we present an iterative algorithm based on the combination of constraint and column generation that efficiently solves the 0-1 \acp{MRC} \ac{LP} \eqref{eq:mrc_linear_primal} (see Algorithm~\ref{alg:efficient_mrc_for_large_n_m} for the pseudocode). Efficient learning for \acp{MRC} can be achieved since only a small subset of variables and constraints are active at the solution of the \ac{LP} formulation for multiple settings of large-scale data, as detailed in \mbox{Sections~\ref{subsec:large_n}} and~\ref{subsec:large_n_m}.

The proposed Algorithm~\ref{alg:efficient_mrc_for_large_n_m} obtains the optimal solution $\B{\mu}^*$ by iteratively solving a sequence of subproblems.
\setlength{\textfloatsep}{1pt}
\vskip -0.1cm
\begin{algorithm}[ht]
\captionsetup{labelfont={bf}, format=hang}
\caption{Efficient learning of MRCs for large-scale data}
\label{alg:efficient_mrc_for_large_n_m}
\begin{tabular}{ll}
\textbf{Input:} & \hspace{-0.4cm} $\V{F}, \V{b}, \B{\tau}, \text{ and } \B{\lambda}$, \\
& \hspace{-0.4cm} initial subset of features $\set{J}$ and constraints $\set{I}$, \\
& \hspace{-0.4cm} initial solution $\B{\mu}_1^1$, $\B{\mu}_2^1$, $\nu^1$ of $\mathcal{P}_{\set{I}, \set{J}}$, \\
& \hspace{-0.4cm} initial solution $\B{\alpha}^1$ of $\mathcal{D}_{\set{I}, \set{J}}$, \\
& \hspace{-0.4cm} primal constraints' violation threshold $\epsilon_1$, \\
& \hspace{-0.4cm} dual constraints' violation threshold $\epsilon_2$, \\
& \hspace{-0.4cm} constraint limit per iteration $n_\text{max}$, \\
& \hspace{-0.4cm} feature limit per iteration $m_\text{max}$. \\
& \hspace{-0.4cm} maximum number of iterations $k_\text{max}$. \\
\textbf{Output:} & \hspace{-0.3cm}optimal solution $\B{\mu}^{*} \in \mathbb{R}^{m}$, \\
& \hspace{-0.3cm}worst-case error probability $\up{R}^{*}$. \\
\end{tabular}
\begin{algorithmic}[1]
\setstretch{1.2}
\State $k \leftarrow 1$
\Repeat
\State $\hat{\set{J}} \leftarrow \set{J}, \ \hat{\set{I}} \leftarrow \set{I}$
\State $\set{J} \leftarrow \text{FEAT}(\hat{\set{I}}, \ \hat{\set{J}}, \ \B{\alpha}^k, \ \epsilon_2, \ m_\text{max})$
\State $\set{I} \leftarrow \text{CONSTR}(\hat{\set{I}}, \ \hat{\set{J}}, \ \B{\mu}_1^{k},\ \B{\mu}_2^{k},\
 \nu^k, \ \epsilon_1, \ n_\text{max})$
\State LPSOLVE$\big(\set{I}, \set{J}\big)$ \label{alg:efficient_mrc:line:11}
\Statex \hspace{0.4cm} $\B{\mu}_1^{k+1}, \ \B{\mu}_2^{k+1}, \ \nu^{k+1} \leftarrow$ Solution of primal $\mathcal{P}_{\set{I}, \set{J}}$
\Statex \hspace{0.4cm} $\B{\alpha}^{k+1} \leftarrow \text{Solution of dual }\mathcal{D}_{\set{I}, \set{J}}$
\Statex \hspace{0.4cm} $\up{R}^{k+1} \leftarrow$ Optimal value
\State $k \leftarrow k + 1$
\Until{$\set{I} \setminus \hat{\set{I}} = \emptyset$ and $\set{J} \setminus \hat{\set{J}} = \emptyset$ and $k \leq k_\text{max}$}
\State $\B{\mu}^{*} = [0,0,\ldots,0] \in \mathbb{R}^m$
\State $\B{\mu}^{*}_{\set{J}} \leftarrow \B{\mu}_1^k - \B{\mu}_2^k$, \ $\up{R}^{*} \leftarrow \up{R}^{k}$
\end{algorithmic}
\end{algorithm}
\vskip -0.2cm
These subproblems correspond to the \ac{LP} in \eqref{eq:mrc_linear_primal} defined over a subset of constraints and features. Specifically, the subproblem corresponding to the subset of constraints $\set{I} \subseteq \{1, 2, \ldots, p\}$ and features $\set{J} \subseteq \{1, 2, \ldots, m\}$ is
defined as
\begin{align}
\label{eq:mrc_linear_subprob_primal}
\arraycolsep=0.5pt\def\arraystretch{1}
\begin{array}{ccc}\mathcal{P}_{\set{I}, \set{J}}: & \underset{\B{\mu}_1, \B{\mu}_2, \nu}{\min} & - (\B{\tau} - \B{\lambda})_{\mathcal{J}}^{\up{T}}\B{\mu}_1 + (\B{\tau} + \B{\lambda})^{\up{T}}_{\mathcal{J}}\B{\mu}_2 + \nu \\
 & \text{s.t.} & \B{\up{F}}_{\set{I}, \set{J}}(\B{\mu}_1 - \B{\mu}_2) - \nu\B{1} \preceq \V{b}_\set{I}, \ \B{\mu}_1, \B{\mu}_2 \succeq 0.
 \end{array}
\end{align}
In addition, the dual of \eqref{eq:mrc_linear_subprob_primal} is
\begin{equation}
\label{eq:mrc_linear_subprob_dual}
\arraycolsep=0.5pt\def\arraystretch{1}
\begin{array}{ccc}
\hspace{-0.1cm}\mathcal{D_{\set{I}, \set{J}}}: & \underset{\B{\alpha}}{\max} & -\V{b}_{\set{I}}^{\up{T}}\B{\alpha} \\
& \text{s.t.} & (\B{\tau} - \B{\lambda})_{\set{J}} \ \preceq \ ({\B{\up{F}}}_ {\set{I}, \set{J}})^{\up{T}}\B{\alpha} \ \preceq  \ (\B{\tau} + \B{\lambda})_{\set{J}} \\
& & \B{1}^{\up{T}}\B{\alpha} = 1, \B{\alpha} \succeq 0.
\end{array}
\end{equation}

Each iteration $k$ of Algorithm~\ref{alg:efficient_mrc_for_large_n_m} solves subproblems \eqref{eq:mrc_linear_subprob_primal} and \eqref{eq:mrc_linear_subprob_dual}, and obtains primal solution $\B{\mu}^k$, $\nu^k$ and the dual solution $\B{\alpha}^k$ along with the worst-case error probability $\up{R}^k$ given by the optimal value. The primal solution is used by the function CONSTR to obtain the subsequent primal constraints based on the constraints' violation. The dual solution is used by the function FEAT to obtain the subsequent set of features based on the dual constraints' violation.

The function FEAT obtains the subsequent set of features $\set{J}$ by adding and/or removing features based on the violations in the corresponding dual constraints. In particular, each feature $j \in [m]$ corresponds to two variables in the primal ${\mu_1}^k_j$ and ${\mu_2}^k_j$ that correspond to the two constraints $(\V{F}_{\hat{\set{I}}, j})^\up{T}\B{\alpha}^k \geq \tau_j - \lambda_j$ and $(\V{F}_{\hat{\set{I}}, j})^\up{T}\B{\alpha}^k \leq \tau_j + \lambda_j$ in the dual. Therefore, a new feature $j \in [m] \setminus \hat{\set{J}}$ is added to the set $\set{J}$ if one of the corresponding dual constraints is violated by at least $\epsilon_2 \geq 0$, that is, $|(\V{F}_{\hat{\set{I}}, j})^\up{T}\B{\alpha}^k - \tau_j| - \lambda_j \geq \epsilon_2$. In addition, an existing feature $j \in \set{J}$ is removed if the corresponding dual constraints are overly-satisfied, that is, $|(\V{F}_{\hat{\set{I}}, j})^\up{T}\B{\alpha}^k - \tau_j| - \lambda_j < 0$. Similarly, the function CONSTR obtains the subsequent set of primal constraints $\set{I}$ by adding and/or removing constraints based on the violations. In particular, a new constraint $i \in [p] \setminus \hat{\set{I}}$ is added to the set $\set{I}$ if it is violated by at least $\epsilon_1 \geq 0$, that is, $\V{F}_{i,\hat{\set{J}}}(\B{\mu}_1^k - \B{\mu}_2^k) - \nu^k - \up{b}_i \geq \epsilon_1$. In addition, an existing constraint $i \in \set{I}$ is removed if it is overly-satisfied, that is, ${\V{F}}_{i, \hat{\set{J}}}(\B{\mu}_1^k - \B{\mu}_2^k) - \nu^k - \up{b}_i < 0$.

The function FEAT adds $m_\text{max}$ features corresponding to the $m_\text{max}$ most violated constraints in the dual. This greedy selection process requires evaluating all the constraints in the dual so that its complexity is $O(q)$. On the other hand, the function CONSTR adds $n_\text{max}$ violated constraints of the primal using the greedy algorithm described in the following to achieve a significantly lower complexity than $O(p)$.

\subsubsection{Efficient evaluation of primal constraints violation}
\label{subsec:constr_eval}
The number of constraints $p$ in the primal \ac{MRC} \ac{LP} \eqref{eq:mrc_linear_primal} is given by $n(2^{|\set{Y}|}-1)$. Evaluating all the constraints to find the maximum violation has an exponential complexity in terms of the number of classes $|\set{Y}|$ that is not affordable in multi-class settings. In the following, we present Algorithm~\ref{alg:greedy_constraint_check} that computes the constraint with maximum violation in a significantly lower complexity.

\vspace{-0.2cm}
\begin{algorithm}[ht]
\captionsetup{labelfont={bf}, format=hang}
\caption{Efficient evaluation of constraints violation}
\label{alg:greedy_constraint_check}
\begin{tabular}{ll}
\textbf{Input:} & \hspace{-0.4cm} instance $x \in \set{X}$, \\
& \hspace{-0.4cm} feature mapping $\Phi$, \\
& \hspace{-0.4cm} primal solution $\B{\mu}$ \\
\textbf{Output:} & \hspace{-0.3cm}index $i^*$ for the subset \\ 
& \hspace{-0.4cm} corresponding to maximum in $\varphi(\B{\mu})$ \\
\end{tabular}
\begin{algorithmic}[1]
\setstretch{1.2}
\State $\B{\up{v}} = [\Phi(x,1), \Phi(x,2), \ldots, \Phi(x,|\set{Y}|)]^{\up{T}}\B{\mu}$
\State $i_1, i_2, \ldots, i_{|\set{Y}|}$ = ARGSORT($\B{\up{v}}$)
\State $\V{c} = [0,0,\ldots,0] \in \mathbb{R}^{|\set{Y}|}$
\State $\psi = \up{v}_{i_1} - 1, \up{c}_{i_1} = 1$
\For{$k = 2,3,\ldots,|\set{Y}|$}
\State $\hat{\psi} = ((k-1)\psi + \up{v}_{i_{k}}) / k$
\If{$\hat{\psi} \geq \psi$}
\State $\psi = \hat{\psi}$, $\up{c}_{i_k} = 1$
\EndIf
\EndFor
\State $i^* = \sum_{j=1}^{|\set{Y}|}\up{c}_j2^{j-1}$ 
\end{algorithmic}
\end{algorithm}
\vspace{-0.2cm}
The \ac{MRC} \ac{LP} \eqref{eq:mrc_linear_primal} has exponential number of constraints corresponding to each instance $x_i \in \set{X}$. The exponential number of constraints correspond to different non-empty subsets $\set{C} \subseteq \set{Y}$ as defined by the rows and components of submatrix $\V{F}^{x_i}$ and subvector $\hat{\V{b}}$ in \eqref{eq:F_submat} for $x_i \in \set{X}$. The Algorithm~\ref{alg:greedy_constraint_check} returns the index $i^* \in \{1,2\ldots,2^{|\set{Y}|}-1\}$ corresponding with the subset achieving the maximum constraint.
Specifically, a subset achieving the maximum constraint corresponds to the subset achieving the maximum value 
\begin{equation}
   \psi = \underset{\set{C} \subseteq \set{Y}}{\max} \frac{\sum_{y\in\set{C}} \Phi(x_i,y)^{\up{T}}\B{\mu} - 1}{|\set{C}|}
\end{equation}
for an $x_i \in \set{X}$ and given solution $\B{\mu}$.
The Algorithm~\ref{alg:greedy_constraint_check} computes the maximum value $\psi$ over all subsets in a greedy fashion similar to the approach presented in \cite{FathLiuAsi:16}.
In particular, the algorithm computes $\Phi(x_i, y)^{\up{T}}\B{\mu}$ corresponding to each $y \in \set{Y}$ and sorts them in decreasing order. Then, the algorithm starts with empty set $\set{C}=\emptyset$ and iteratively adds a label to the subset $\set{C}$ in the sorted order until adding a label does not increase the value of $\hat{\psi}$. Such greedy approach obtains the maximum value $\psi$ over all the subsets since the maximum value for a fixed subset length is obtained by the set with labels corresponding with maximum values of $\Phi(x_i, y)^{\up{T}}\B{\mu}$. Therefore, the computational complexity of Algorithm~\ref{alg:efficient_mrc_for_large_n_m} is $O(|\set{Y}|\log|\set{Y}|)$ due to the sorting in line 2.



In the following, we present the implementation details along with the theoretical properties of the proposed learning algorithm for multiple scenarios.

\vspace{-0.1cm}
\subsection{Learning with a large number of samples}
\label{subsec:large_n}
A large number of samples together with a reduced number of features leads to an \ac{MRC} primal \ac{LP} with a large number of constraints and few variables. In such scenario, usually a small set of constraints will be active in the solution \cite{DimTsi97}. Algorithm~\ref{alg:efficient_mrc_for_large_n_m} enables efficient learning by iteratively selecting the relevant subset of constraints using the CONSTR function. In particular, such a function adds and removes constraints (as defined in Section~\ref{sec:general}) along the iterations while considering all the features, that is, the function FEAT is disabled in this scenario. 


The following theorem shows that the proposed algorithm for a large number of samples provides an increasing sequence of worst-case error probabilities and converges to the worst-case error probability given by the \ac{MRC} \ac{LP} corresponding with all the constraints.

\begin{theorem}
\label{th:convergence_n_greater_than_m}
    Let $\up{R}^*$ be the worst-case error probability obtained by solving \eqref{eq:mrc_linear_primal} using all the constraints and features. If $\up{R}^{k}, k=1,2\ldots,$ is the sequence of optimal values obtained by adding and removing constraints along the iterations of the proposed algorithm. Then, we have 
    \begin{align}
        \label{ineq:increasing_worst_case_risk}
        \up{R}^{k} \leq \up{R}^{k+1}.
    \end{align}
    Moreover, if $\hat{\epsilon}_1$ is the largest violation in the constraints of the primal at iteration $k_0$, then at any iteration $k \geq k_0$,
    \begin{align}
        \label{ineq:convergence_worst_case_risk_large_n}
        \up{R}^{*} - \hat{\epsilon}_1 \leq \up{R}^k \leq \up{R}^{*}.
    \end{align}
\vspace{-0.6cm}    .
\end{theorem}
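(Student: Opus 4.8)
The plan is to prove the three bounds separately. The inequality $\up{R}^{k}\le\up{R}^{*}$ in \eqref{ineq:convergence_worst_case_risk_large_n} is immediate: since FEAT is disabled in this scenario, every subproblem the algorithm solves is $\mathcal{P}_{\set{I},[m]}$ for some $\set{I}\subseteq\{1,\ldots,p\}$, which keeps all the variables of the full primal \eqref{eq:mrc_linear_primal} but only imposes the constraints indexed by $\set{I}$; hence its feasible set contains that of \eqref{eq:mrc_linear_primal}, and minimizing the same linear objective over a larger set can only lower the optimum, so $\up{R}^{k}\le\up{R}^{*}$ for all $k$.

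The monotonicity \eqref{ineq:increasing_worst_case_risk} is the step I expect to be the crux, because CONSTR both adds and removes constraints, so the primal feasible sets of consecutive subproblems are not nested and a direct primal comparison fails. The key is that a constraint $i$ is removed only when it is \emph{strictly} satisfied at the current primal optimum $(\B{\mu}_1^{k},\B{\mu}_2^{k},\nu^{k})$, so by complementary slackness for the primal--dual optimal pair of that subproblem the matching dual coordinate $\alpha^{k}_{i}$ must vanish. I would therefore argue in the dual \eqref{eq:mrc_linear_subprob_dual}: let $\hat{\set{I}}$ be the constraint set of the subproblem that produced $\B{\alpha}^{k}$ and $\up{R}^{k}$, let $\set{I}$ be the set returned by CONSTR, and form $\B{\alpha}$ from $\B{\alpha}^{k}$ by deleting the coordinates of the removed constraints (all zero) and appending zeros for the added ones. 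Because only zero coordinates were changed, $(\V{F}_{\set{I},[m]})^{\up{T}}\B{\alpha}=(\V{F}_{\hat{\set{I}},[m]})^{\up{T}}\B{\alpha}^{k}$, $\B{1}^{\up{T}}\B{\alpha}=\B{1}^{\up{T}}\B{\alpha}^{k}=1$, and $\B{\alpha}\succeq 0$, so $\B{\alpha}$ is feasible for $\mathcal{D}_{\set{I},[m]}$; moreover $-\V{b}_{\set{I}}^{\up{T}}\B{\alpha}=-\V{b}_{\hat{\set{I}}}^{\up{T}}\B{\alpha}^{k}=\up{R}^{k}$. Since $\up{R}^{k+1}$ is the optimal value of $\mathcal{D}_{\set{I},[m]}$, this gives $\up{R}^{k+1}\ge\up{R}^{k}$. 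The routine-but-essential work is to verify this feasibility transfer and to apply complementary slackness exactly where the removal rule bites.

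It remains to prove the lower bound in \eqref{ineq:convergence_worst_case_risk_large_n}. Fix an iteration $k_{0}$ and let $\hat{\epsilon}_{1}$ be the largest violation among the $p$ constraints of \eqref{eq:mrc_linear_primal} at $(\B{\mu}_1^{k_{0}},\B{\mu}_2^{k_{0}},\nu^{k_{0}})$, i.e. $\V{F}_{i,[m]}(\B{\mu}_1^{k_{0}}-\B{\mu}_2^{k_{0}})-\nu^{k_{0}}-\up{b}_{i}\le\hat{\epsilon}_{1}$ for all $i$. Then the shifted point $(\B{\mu}_1^{k_{0}},\B{\mu}_2^{k_{0}},\nu^{k_{0}}+\hat{\epsilon}_{1})$ satisfies $\V{F}(\B{\mu}_1^{k_{0}}-\B{\mu}_2^{k_{0}})-(\nu^{k_{0}}+\hat{\epsilon}_{1})\B{1}\preceq\V{b}$ together with $\B{\mu}_1^{k_{0}},\B{\mu}_2^{k_{0}}\succeq 0$, so it is feasible for \eqref{eq:mrc_linear_primal}, and its objective value equals $\up{R}^{k_{0}}+\hat{\epsilon}_{1}$; hence $\up{R}^{*}\le\up{R}^{k_{0}}+\hat{\epsilon}_{1}$, i.e. $\up{R}^{k_{0}}\ge\up{R}^{*}-\hat{\epsilon}_{1}$. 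Combining with the monotonicity already established, $\up{R}^{k}\ge\up{R}^{k_{0}}\ge\up{R}^{*}-\hat{\epsilon}_{1}$ for every $k\ge k_{0}$, which together with $\up{R}^{k}\le\up{R}^{*}$ yields \eqref{ineq:convergence_worst_case_risk_large_n}.
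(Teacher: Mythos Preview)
Your proof is correct. For the monotonicity \eqref{ineq:increasing_worst_case_risk} and the upper bound $\up{R}^{k}\le\up{R}^{*}$ you follow the same strategy as the paper: transfer the previous dual optimum to the new subproblem after observing, via complementary slackness, that removed primal constraints carry zero dual mass (the paper phrases this as the warm-start being obtained ``by removing and adding \ldots\ zero coefficients''). Your treatment is in fact cleaner, since you make the complementary-slackness step explicit.

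Where you depart from the paper is the lower bound $\up{R}^{*}-\hat{\epsilon}_{1}\le\up{R}^{k_{0}}$. The paper proves it by relaxing all primal constraints to $\V{F}(\B{\mu}_1-\B{\mu}_2)-\nu\B{1}\preceq\V{b}+\hat{\epsilon}_{1}\B{1}$, writing the dual of this relaxed LP, noting that the full dual optimum $\B{\alpha}^{*}$ remains feasible there, and invoking weak duality together with $\B{1}^{\up{T}}\B{\alpha}^{*}=1$. You instead exploit that $\nu$ is a free variable appearing with coefficient $-1$ in every constraint and coefficient $+1$ in the objective: shifting $\nu^{k_{0}}\mapsto\nu^{k_{0}}+\hat{\epsilon}_{1}$ produces a point feasible for the full primal with objective $\up{R}^{k_{0}}+\hat{\epsilon}_{1}$, giving the bound directly. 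Your route is shorter and avoids any dual machinery for this step; the paper's route, on the other hand, is the template that generalizes to Theorem~\ref{th:convergence_of_cg_cp}, where one must also handle dual constraint violations $\hat{\epsilon}_{2}$ and a symmetric relaxed-LP/weak-duality argument is reused.
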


\begin{proof}
See Appendix \ref{subsec:proof_th1}.
\end{proof}

\vspace{-0.2cm}
Inequality \eqref{ineq:increasing_worst_case_risk} shows that the algorithm obtains an increasing sequence of worst-case error probabilities along the iterations. On the other hand, the inequality \eqref{ineq:convergence_worst_case_risk_large_n} shows that the algorithm finds the \ac{MRC} corresponding to all the constraints when there is no violation in the primal constraints, that is, $\hat{\epsilon}_1 = 0$. Such a case occurs after a finite number of iterations due to the properties of constraint generation methods \cite{DimTsi97}. In other cases, it finds an approximate solution with accuracy that depends directly on the largest violation $\hat{\epsilon}_1$. Moreover, if the algorithm terminates with $k<k_\text{max}$, then the hyper-parameter $\epsilon_1$ is the largest violation which can provide a trade-off between the accuracy and the complexity of the algorithm (larger values for $\epsilon_1$ decrease the number of iterations but lead to approximate solutions). In the following, we present the implementation details for the scenario with large number of samples and features along with the corresponding theoretical analysis.


\vspace{-0.2cm}
\subsection{Learning with a large number of samples and features}
\vspace{-0.1cm}
\label{subsec:large_n_m}
A large number of samples and features corresponds to large number of constraints and variables in the \ac{MRC} primal \ac{LP}. Efficient learning can be achieved in such scenarios leveraging the sparsity in the \acp{MRC} solution due to the L1-penalization. In particular, the solution sparsity implies that \acp{MRC} learning can be carried out using a small subset of features, that is, a small subset of variables in the \ac{MRC} \ac{LP}. Moreover, a small subset of variables implies that only a subset of constraints will be active in the solution. Algorithm~\ref{alg:efficient_mrc_for_large_n_m} enables efficient learning by iteratively selecting the relevant variables and constraints. In particular, Algorithm~\ref{alg:efficient_mrc_for_large_n_m} generates the relevant subset of features and constraints using the functions FEAT and CONSTR that add features and constraints (as defined in Section~\ref{sec:general}) along the iterations. In this scenario, functions CONSTR and FEAT do not remove constraints or features to ensure a proper convergence of the algorithm.

The following theorem bounds the worst-case error probability obtained by the proposed algorithm at any iteration. In addition, such result shows that the worst-case error probability converges to the worst-case error probability of the \acp{MRC} \ac{LP} corresponding with all the constraints and features.
\begin{theorem}
\label{th:convergence_of_cg_cp}
    Let $\B{\mu}^{*}$ and $\up{R}^{*}$ be the \ac{MRC} coefficient and the worst-case error probability obtained by solving \eqref{eq:mrc_linear_primal} using all the constraints and features. If $\up{R}^{k}$ is the worst-case error probability obtained at iteration $k$ of Algorithm~\ref{alg:efficient_mrc_for_large_n_m}. Then,
    \begin{align}
        \label{ineq:convergence_worst_case_risk_large_n_m}
        \up{R}^{*} - \hat{\epsilon}_1 \leq \up{R}^{k} \leq \up{R}^{*} + \hat{\epsilon}_2{\|\B{\mu}^*\|}_1
    \end{align}
    for the largest violations $\hat{\epsilon}_1$ and $\hat{\epsilon}_2$ in the constraints of the primal and dual at iteration $k$.
\end{theorem}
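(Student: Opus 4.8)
\quad The plan is to prove the two bounds in \eqref{ineq:convergence_worst_case_risk_large_n_m} separately. Fix iteration $k$; let $(\B{\mu}_1^k,\B{\mu}_2^k,\nu^k)$ and $\B{\alpha}^k$ be the primal and dual optimal solutions of the restricted pair of LPs of optimal value $\up{R}^k$, solved over a constraint set $\set{I}$ and feature set $\set{J}$ (so both share the value $\up{R}^k$ by LP strong duality). I would regard these iterates as points in the full variable spaces of $\mathcal{P}$ and $\mathcal{D}$ by appending zeros on the coordinates outside $\set{J}$ and $\set{I}$. This appending leaves the objective values unchanged, since the dropped primal variables carry zero cost and $\V{b}_{\set{I}}^{\up{T}}\B{\alpha}^k=\V{b}^{\up{T}}\B{\alpha}^k$, and it preserves $\B{\mu}_1,\B{\mu}_2\succeq\V{0}$, $\B{\alpha}\succeq\V{0}$, and $\B{1}^{\up{T}}\B{\alpha}=1$.

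For the lower bound, the zero-padded $(\B{\mu}_1^k,\B{\mu}_2^k,\nu^k)$ satisfies all constraints of $\mathcal{P}$ with index in $\set{I}$ and violates the remaining ones by at most $\hat{\epsilon}_1=\max_{i\in[p]}\big(\V{F}_{i,\set{J}}(\B{\mu}_1^k-\B{\mu}_2^k)-\nu^k-\up{b}_i\big)$. Replacing $\nu^k$ by $\nu^k+\hat{\epsilon}_1$ makes the point feasible for every row of $\V{F}(\B{\mu}_1-\B{\mu}_2)-\nu\B{1}\preceq\V{b}$ while leaving $\B{\mu}_1,\B{\mu}_2\succeq\V{0}$ untouched, and its objective value in $\mathcal{P}$ equals $\up{R}^k+\hat{\epsilon}_1$; since $\up{R}^*$ is the minimum of $\mathcal{P}$, this yields $\up{R}^*\le\up{R}^k+\hat{\epsilon}_1$.

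For the upper bound, the zero-padded $\B{\alpha}^k$ satisfies $\B{1}^{\up{T}}\B{\alpha}^k=1$, $\B{\alpha}^k\succeq\V{0}$, and the box constraints of $\mathcal{D}$ up to tolerance $\hat{\epsilon}_2$, i.e.\ $|(\V{F}_{[p],j})^{\up{T}}\B{\alpha}^k-\tau_j|-\lambda_j\le\hat{\epsilon}_2$ for all $j\in[m]$. I would form the Lagrangian of $\mathcal{P}$ with multiplier $\B{\alpha}^k$ for the constraints $\V{F}(\B{\mu}_1-\B{\mu}_2)-\nu\B{1}\preceq\V{b}$: the term in the free variable $\nu$ cancels because $\B{1}^{\up{T}}\B{\alpha}^k=1$, the coefficient of each $\mu_{1,j}\ge0$ equals $(\V{F}_{[p],j})^{\up{T}}\B{\alpha}^k-(\tau_j-\lambda_j)\ge-\hat{\epsilon}_2$, and the coefficient of each $\mu_{2,j}\ge0$ equals $(\tau_j+\lambda_j)-(\V{F}_{[p],j})^{\up{T}}\B{\alpha}^k\ge-\hat{\epsilon}_2$. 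Evaluating this Lagrangian at an optimal solution $(\B{\mu}_1^*,\B{\mu}_2^*,\nu^*)$ of $\mathcal{P}$ chosen with $\|\B{\mu}_1^*\|_1+\|\B{\mu}_2^*\|_1=\|\B{\mu}^*\|_1$ — such a choice exists because, $\B{\lambda}$ being nonnegative, the positive/negative-part split $\B{\mu}_1^*=(\B{\mu}^*)_+$, $\B{\mu}_2^*=(-\B{\mu}^*)_+$ is optimal — the weak-duality inequality ``the Lagrangian at a primal-feasible point is at most the primal objective there'', combined with the two coefficient bounds and $\B{\mu}_1^*,\B{\mu}_2^*\succeq\V{0}$, gives $\up{R}^*\ge-\V{b}^{\up{T}}\B{\alpha}^k-\hat{\epsilon}_2(\|\B{\mu}_1^*\|_1+\|\B{\mu}_2^*\|_1)=\up{R}^k-\hat{\epsilon}_2\|\B{\mu}^*\|_1$, i.e.\ $\up{R}^k\le\up{R}^*+\hat{\epsilon}_2\|\B{\mu}^*\|_1$. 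Convergence $\up{R}^k\to\up{R}^*$ follows since, in this scenario, $\set{I}$ and $\set{J}$ grow monotonically within the finite index sets $[p]$ and $[m]$ and hence stabilize after finitely many iterations, at which point no primal or dual constraint is violated and \eqref{ineq:convergence_worst_case_risk_large_n_m} with $\epsilon_1=\epsilon_2=0$ collapses to $\up{R}^k=\up{R}^*$.

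I expect the main obstacle to be the dual side: converting the single two-sided box-violation bound $|(\V{F}_{[p],j})^{\up{T}}\B{\alpha}^k-\tau_j|-\lambda_j\le\hat{\epsilon}_2$ into the two one-sided lower bounds $-\hat{\epsilon}_2$ on the Lagrangian coefficients of $\B{\mu}_1$ and $\B{\mu}_2$, and justifying the replacement $\|\B{\mu}_1^*\|_1+\|\B{\mu}_2^*\|_1=\|\B{\mu}^*\|_1$ through the optimality of the positive/negative-part split of $\B{\mu}^*$. The remaining steps — the zero-padding invariances, the cancellation of the $\nu$ term, and the shift of $\nu^k$ by $\hat{\epsilon}_1$ — are routine.
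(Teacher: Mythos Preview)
Your proof is correct. For the upper bound you follow essentially the same route as the paper: the paper observes that $\B{\alpha}^k$ is feasible for the dual with the box constraints relaxed by $\hat{\epsilon}_2$, and then invokes weak duality between this relaxed dual and its primal (whose objective is $(\B{\lambda}+\hat{\epsilon}_2\B{1}-\B{\tau})^{\up{T}}\B{\mu}_1+(\B{\tau}+\B{\lambda}+\hat{\epsilon}_2\B{1})^{\up{T}}\B{\mu}_2+\nu$) evaluated at $(\B{\mu}_1^*,\B{\mu}_2^*,\nu^*)$ with $\B{\mu}_1^*=(\B{\mu}^*)_+$, $\B{\mu}_2^*=(-\B{\mu}^*)_+$; your Lagrangian computation is exactly this weak-duality step written out componentwise.

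For the lower bound your argument is a bit different and in fact more direct. The paper obtains $\up{R}^*-\hat{\epsilon}_1\le\up{R}^k$ by appealing to Theorem~\ref{th:convergence_n_greater_than_m}, whose proof passes through the relaxed primal $\V{F}(\B{\mu}_1-\B{\mu}_2)-\nu\B{1}\preceq\V{b}+\hat{\epsilon}_1\B{1}$ and applies weak duality with the full dual optimum $\B{\alpha}^*$, using $\B{1}^{\up{T}}\B{\alpha}^*=1$ to turn $\hat{\epsilon}_1\B{1}^{\up{T}}\B{\alpha}^*$ into $\hat{\epsilon}_1$. You instead exploit the special structure of $\mathcal{P}$---that $\nu$ enters with coefficient $1$ in the objective and $-1$ in every constraint---to build a feasible point $(\B{\mu}_1^k,\B{\mu}_2^k,\nu^k+\hat{\epsilon}_1)$ for the full primal directly. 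Both arguments are valid; yours is shorter and avoids invoking the dual optimum, while the paper's route would survive in LPs lacking a free ``slack'' variable like $\nu$.
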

\begin{proof}
See Appendix \ref{subsec:proof_th2}.
\end{proof}

\vspace{-0.2cm}
Inequality \eqref{ineq:convergence_worst_case_risk_large_n_m} shows that the algorithm finds an \ac{MRC} corresponding with all the features and constraints when there is no violation, that is, $\hat{\epsilon}_1=0$ and $\hat{\epsilon}_2=0$. Such a case occurs after a finite number of iterations as the algorithm adds multiple constraints and features in each iteration. 
In case $\hat{\epsilon}_1>0$ and $\hat{\epsilon}_2=0$, the algorithm can under-estimate the worst-case error while in case of $\hat{\epsilon}_1=0$ and $\hat{\epsilon}_2>0$, the algorithm can over-estimate the worst-case error.
In other cases, it obtains an approximate solution that depends on the largest violations $\hat{\epsilon}_1$ and $\hat{\epsilon}_2$. Furthermore,  if the algorithm terminates with $k<k_\text{max}$, then the hyper-parameters $\epsilon_1$ and $\epsilon_2$ are the largest violations and can provide a trade-off between training time and the optimality of the solution.

\vspace{-0.1cm}
\subsection{Computational complexity}
\vspace{-0.1cm}
The computational complexity of Algorithm~\ref{alg:efficient_mrc_for_large_n_m} is determined by two main factors: the number of iterations and the cost per iteration. These, in turn, depend on the maximum number of features and constraints selected in each iteration, denoted by $m_{\up{max}}$ and $n_{\up{max}}$, as well as the constraint violation thresholds $\epsilon_1$ and $\epsilon_2$. Reducing $n_{\up{max}}$ or $m_{\up{max}}$ lowers the per-iteration cost but typically increases the number of iterations required. Conversely, increasing $\epsilon_1$ or $\epsilon_2$ can reduce both the number of iterations and the per-iteration complexity at the cost of obtaining approximate solutions. In addition to the hyper-parameters $n_{\up{max}}$, $m_{\up{max}}$, $\epsilon_1$, and $\epsilon_2$, the per-iteration complexity also depends on the total number of features $m$, the number of samples $n$, and the number of classes $|\set{Y}|$. Specifically, the functions CONSTR and FEAT have time complexities of $O(n|\set{Y}|\log |\set{Y}|)$ and $O(m\log m)$, respectively. The complexity of CONSTR function arises due to a scan over all samples using Algorithm~\ref{alg:greedy_constraint_check}, while the complexity of FEAT function arises from sorting operations required to identify features associated with the most violated dual constraints (see Section~\ref{sec:general} for details).

The number of iterations can be further minimized by carefully selecting the initial subsets of constraints and features. Additionally, choosing an appropriate initial set of constraints is essential to ensure that the feasible region of the \ac{MRC} \ac{LP} is bounded from the very first iteration. A common strategy in constraint generation methods involves using the solution obtained from running a few iterations of a first-order optimization method as a basis for constructing these initial subsets \citep{DedAntEtal:22}. However, the computational cost of first order methods for \acp{MRC} can be large with many samples. We propose to use a simple approach based on clustering to obtain a reasonable choice for the subset of constraints. On the other hand, the corresponding subset of features can be directly obtained using the method in \cite{BonMazPer:23}. 
Specifically, a subset of constraints are obtained as that corresponding to $|\set{Y}|$ samples representing the centers of clusters for different classes. In addition, the constraints corresponding to such samples ensure a bounded feasible region (see Section~\ref{subsec:initialization} for the details).

The complexity per iteration can further improved in case of large number of samples with few features by leveraging the warm-start in \acp{LP}.
In such cases, the warm-start for iteration $k$ is directly derived from the dual solution $\B{\alpha}^{k-1}$ of the previous iteration $k-1$. The warm-start serves as a basic feasible solution for the dual \ac{LP} at iteration $k$, as it is constructed by discarding the primal constraints with positive slack or the dual variables with zero coefficients from the solution at iteration $k-1$.

\subsection{Initialization}
\vspace{-0.1cm}
\label{subsec:initialization}
An adequate choice for the initial subset of constraints and features can reduce the number of iterations for termination of Algorithm~\ref{alg:efficient_mrc_for_large_n_m}. Moreover, an adequate subset of constraints is required to ensure a bounded feasible region for the initial \ac{LP} at the first iteration of Algorithm~\ref{alg:efficient_mrc_for_large_n_m}. In the following, we present an efficient approach based on clustering to obtain an appropriate initial subset of constraints $\set{I}$ and avoid the unbounded feasible region for the initial \ac{LP}. Note that the corresponding initial subset of features $\set{J}$ can be directly obtained by a few iterations of the learning algorithm presented in \cite{BonMazPer:23} on the \ac{LP} defined by $\set{I}$. 

Consider the following $|\set{Y}|$ samples corresponding to the center of the clusters for each class.
\vspace{-0.1cm}
\begin{align}
\label{def:x_art}
    \hat{\set{X}} = \{\hat{\B{x}}_i : \hat{\B{x}}_i = \B{\tau}^i / \up{p}^{y}_{i} \ \forall \ i = 1,2,\ldots,|\set{Y}|\}
\end{align}
where $\B{\tau}^i \in \mathbb{R}^{d}$ denotes the components of $\B{\tau}$ corresponding to class $i$ and $\B{\up{p}}^{y} \in \mathbb{R}^{|\set{Y}|}$ is a vector of class proportions, that is, $\up{p}^{y}_i = \sum_{j=0}^n \mathds{1}\{y_j = i\} / n$ for $n$ training samples. The \ac{MRC} optimization problem \eqref{eq:mrc} solved using samples in $\hat{\set{X}}$, and the estimates $\B{\tau}$ and $\B{\lambda}$ leads to a bounded \ac{LP} since the uncertainty set \eqref{eq:us} is non-empty. Specifically, the $|\set{Y}|$ samples ensure non-emptiness as a distribution with weights $\B{\up{p}}^{y}$ is inside the set since $\sum_{i=0}^{|\set{Y}|}{\up{p}^{y}_i}{\Phi}(\hat{\B{x}}_i, y_i) = \B{\tau}$ for $\hat{\B{x}}_i \in \hat{\set{X}}$. Therefore, the initial \acp{MRC} \ac{LP} \eqref{eq:mrc_linear_primal} given by the set of constraints corresponding to the samples in $\hat{\set{X}}$ is bounded.

The initial \ac{MRC} \ac{LP} using all the $|\set{Y}|(2^{|\set{Y}|} - 1)$ constraints defined over the $|\set{Y}|$ samples in \eqref{def:x_art} ensures a bounded feasible region. Initializing Algorithm~\ref{alg:efficient_mrc_for_large_n_m} using such constraints is a viable option upto a few labels. However, the initialization can be computational inefficient with a large number of classes as the number of constraints in the initial \ac{LP} become significantly large. Therefore, we propose a variation of the above initialization that can be useful for a large number of classes. 
Specifically, we use a reduced subset from the constraints obtained from $\hat{\set{X}}$ as initialization over an \acp{MRC} \ac{LP} with an additional constraint enforcing the objective to be positive. 
Notice that the objective value of the \ac{MRC} \ac{LP} has to be a positive value since it is the worst-case error probability. Therefore, the additional constraint does not affect the optimal solution and ensures a feasible bounded region for the LP.
The initial solution obtained for the reduced subset of constraints over the redefined \ac{MRC} \ac{LP} can be trivial, that is, zero. However, Algorithm~\ref{alg:efficient_mrc_for_large_n_m} eventually obtains a non-trivial solution by iteratively selecting the constraints and features. 

The proposed algorithm for \acp{MRC} provides efficient learning for multiple scenarios of large-scale learning. The theoretical analysis shows that the algorithm can deal with accuracy vs efficiency trade-offs by selecting appropriate values for the hyper-parameters $\epsilon_1$ and $\epsilon_2$. In the next section, we further assess the efficiency achieved through numerical experiments using multiple large-scale datasets.


\begin{table}[ht]
 \captionsetup{labelfont={it}, labelsep=period, font=small, skip=5pt}
                         \caption{Datasets.}
    \vskip -0.15in
     \label{tb:datasets}
\setstretch{1.2}
\begin{center}
\scalebox{0.85}{\begin{tabular}{|c|c|c|c|}
\hline
Dataset & \multicolumn{1}{c|}{Samples ($n$)} & \multicolumn{1}{c|}{Features ($d$)} & \multicolumn{1}{c|}{Classes ($|\set{Y}|$)} \\ \hline
pulsar & 17898 & 8 & 2 \\
house16 & 22784 & 16 & 2 \\
cats vs dogs & 23262  & 512 & 2 \\
yearbook & 37921 & 512 & 2  \\
rcv1 & 20242 & 47236 & 2\\
real\textunderscore sim & 72309 & 20958 & 2\\
news20 & 19996 & 1355191 & 2\\
satellite & 6435 & 36 & 6\\
dry\textunderscore bean & 13611 & 16 & 7 \\
optdigits & 5620 & 64 & 10 \\
mnist & 70000 & 512 & 10 \\
fashion\textunderscore mnist & 70000 & 512 & 10 \\
cifar10 & 60000 & 512 & 10 \\
cifar100 & 60000 & 512 & 100 \\
 \hline
\end{tabular}}
\end{center}
\vskip -0.2cm
\end{table}
\begin{figure*}[ht]
    \centering
    \begin{subfigure}[b]{0.31\textwidth}
        \centering
        \psfrag{Time in secs}[c][t][0.7]{Time in secs}
        \psfrag{upper bound difference}[c][t][0.7]{$\up{R}^k - \up{R}^{*}$}
        \psfrag{cg}[l][l][0.7]{MRC-CCG}
        \psfrag{gurobi}[l][l][0.7]{MRC-LP}
        \psfrag{123456789123456789123456}[l][l][0.7]{MRC-SUB}
        \psfrag{0}[r][r][0.5]{0}
        \psfrag{a}[r][r][0.7]{$10^{-2}$}
        \psfrag{b}[r][r][0.7]{$10^{-1}$}
        \psfrag{c}[r][r][0.7]{$10^{0}$}
        \psfrag{d}[c][c][0.7]{$10^{3}$}
        \psfrag{50}[r][r][0.7]{50}
        \psfrag{100}[r][r][0.7]{100}
        \psfrag{150}[r][r][0.7]{150}
        \psfrag{0.05}[r][r][0.5]{}
        \psfrag{0.15}[r][r][0.5]{}
        \psfrag{0.25}[r][r][0.5]{}
        \includegraphics[width=\textwidth]{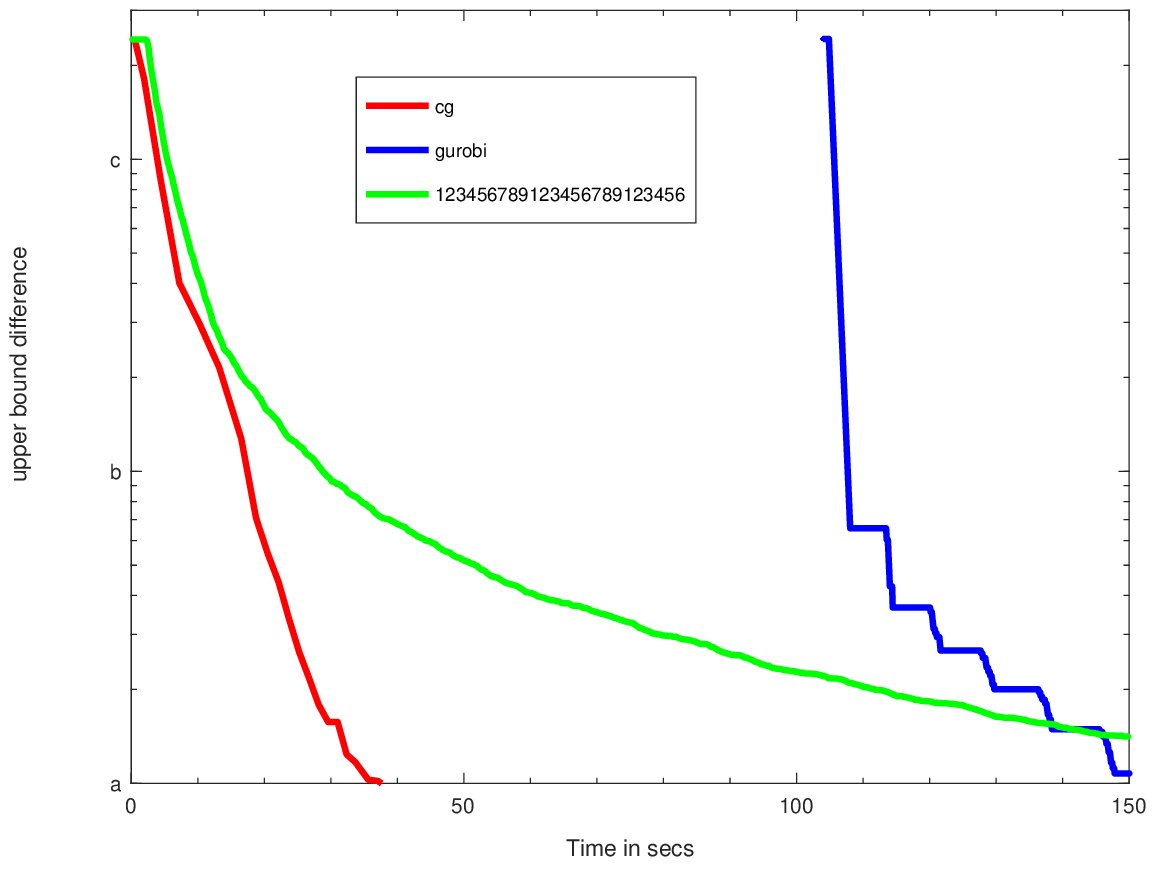}
        \captionsetup{labelfont={it}, font=small, skip=2pt}
        \caption{cats vs dogs dataset}
        \label{fig:acc_vs_time_cats}
    \end{subfigure}
    \begin{subfigure}[b]{0.31\textwidth}
        \centering
        \psfrag{Time in secs}[c][t][0.7]{Time in secs}
        \psfrag{0}[r][r][0.7]{}
        \psfrag{a}[c][c][0.7]{$10^{0}$}
        \psfrag{b}[c][c][0.7]{$10^{2}$}
        \psfrag{c}[c][c][0.7]{$10^{3}$}
        \psfrag{0.05}[r][r][0.7]{0.05}
        \psfrag{0.2}[r][r][0.7]{}
        \psfrag{0.15}[r][r][0.7]{0.15}
        \psfrag{0.4}[r][r][0.7]{}
        \psfrag{0.25}[r][r][0.7]{0.25}
        \includegraphics[width=\textwidth]{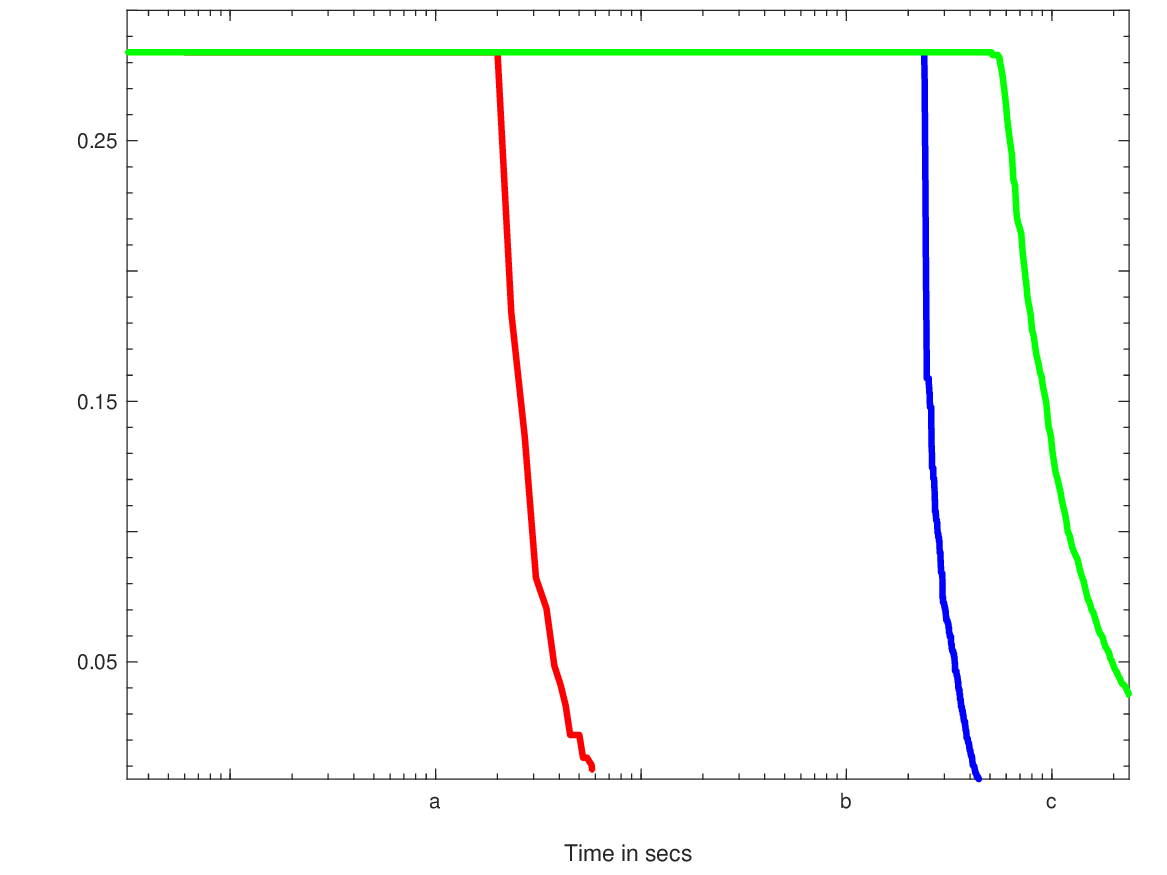}
        \captionsetup{labelfont={it}, font=small, skip=2pt}
        \caption{satellite dataset}
        \label{fig:acc_vs_time_satellite}
    \end{subfigure}
    \begin{subfigure}[b]{0.31\textwidth}
        \centering
        \psfrag{Time in secs}[c][t][0.7]{Time in secs}
        \psfrag{0}[r][r][0.7]{$0$}
        \psfrag{aa}[r][r][0.7]{$10^{-1}$}
        \psfrag{bb}[r][r][0.7]{$10^{0}$}
        \psfrag{cc}[r][r][0.7]{$10^{1}$}
        \psfrag{12}[r][r][0.7]{$12$}
        \psfrag{8}[r][r][0.7]{$8$}
        \psfrag{4}[r][r][0.7]{$4$}
        \psfrag{a}[c][c][0.7]{$5000$}
        \psfrag{b}[c][c][0.7]{$10000$}
        \psfrag{c}[c][c][0.7]{$20000$}
        \psfrag{e}[c][c][0.7]{}
        \psfrag{0.1}[r][r][0.7]{0.1}
        \psfrag{0.05}[r][r][0.5]{}
        \psfrag{0.2}[r][r][0.7]{0.2}
        \psfrag{0.15}[r][r][0.5]{}
        \psfrag{0.3}[r][r][0.7]{0.3}
        \psfrag{0.25}[r][r][0.5]{}
        \psfrag{1200}[][][0.7]{}
        \includegraphics[width=\textwidth]{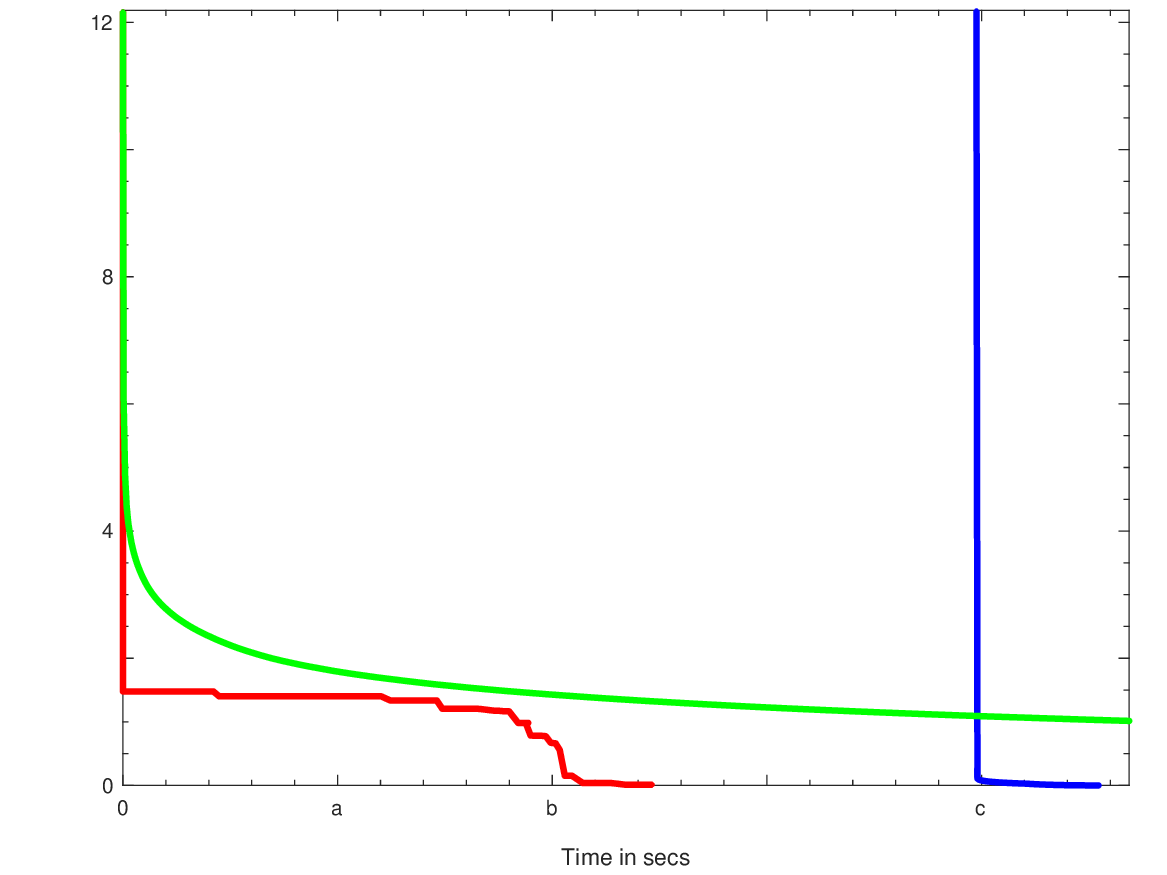}
        \captionsetup{labelfont={it}, font=small, skip=2pt}
        \caption{rcv1 dataset}
        \label{fig:acc_vs_time_rcv1}
    \end{subfigure}
    \captionsetup{labelfont={it}, labelsep=period, font=small, skip=2pt}
    \caption{Convergence of the worst-case error probability $\up{R}^k$ over time. The figures correspond to different scenarios of large-scale learning and demonstrate that \mbox{MRC-CCG} achieves a fast convergence in comparison with state-of-the-art learning methods for \acp{MRC}.}
    \label{fig:acc_vs_time}
\end{figure*}

\vspace{-0.1cm}
\section{Results}
\vspace{-0.1cm}
In this section, we present experimental results that show comparison with state-of-the-art methods in terms of precision in the solution and training time.
All the results are obtained using the hyper-parameters $\epsilon_1=$1e-2, \mbox{$\epsilon_2=$1e-5}, and $n_\text{max}=m_\text{max}=400$ for the proposed algorithm (see Appendix~\ref{subsec:hyper_tuning} for results with other hyper-parameter settings). The experiments are performed in Python 3.9 with a memory limit of 180GB and a time limit of 2e+5 secs. In addition, we use Gurobi optimizer 9.5.2 in our experiments for solving \acp{LP}. 

The proposed algorithm (\mbox{MRC-CCG}) is compared with the \ac{LP} formulation of \acp{MRC} \eqref{eq:mrc_linear_primal} using all constraints and variables (\mbox{MRC-LP}), and the accelerated subgradient method (\mbox{MRC-SUB}) presented in \cite{MazRomGrun:23} which is implemented in the python library MRCpy \citep{BonMazPer:24}. In addition, the proposed algorithm is compared with constraint generation methods for L1-\acp{SVM} (\mbox{SVM-CCG}) presented in \cite{DedAntEtal:22} for binary classification and the L1-\acp{SVM} for multi-class classification (\mbox{SVM-MULTI}) presented in \cite{wang2006l_1}. We set the regularization parameter for all the methods to 0.01.

The experimental results are obtained using 7 binary and 7 multi-class datasets summarized in Table~\ref{tb:datasets}. We use the ResNet18 \citep{he2016deep} features for the image datasets `cats vs dogs', `yearbook', `mnist', `fashion\textunderscore mnist', `cifar10', and `cifar100', and augment the features of the tabular datasets `pulsar', `house16', `satellite', `dry\textunderscore bean', and `optdigits' with 400 random Fourier features. The proposed algorithm (MRC-CCG) is implemented in the python library MRCpy \citep{BonMazPer:24}.

\begin{figure*}
    \begin{center}
    \begin{subfigure}[b]{0.31\textwidth}
        \centering
        \psfrag{Fraction of samples}[c][t][0.8]{Fraction of samples}
        \psfrag{Training time}[c][t][0.8]{Time in secs}
        \psfrag{mrc-cg}[l][l][0.7]{MRC-CCG}
        \psfrag{123456789123456789123456}[l][l][0.7]{MRC-SUB}
        \psfrag{gurobi}[l][l][0.7]{MRC-LP}
        \psfrag{0}[r][r][0.5]{}
        \psfrag{0.1}[c][c][0.7]{0.1}
        \psfrag{0.2}[][][0.7]{}
        \psfrag{0.3}[c][c][0.7]{0.3}
        \psfrag{0.4}[][][0.7]{}
        \psfrag{0.5}[c][c][0.7]{0.5}
        \psfrag{0.6}[][][0.7]{}
        \psfrag{0.7}[c][c][0.7]{0.7}
        \psfrag{a}[r][r][0.7]{$10^1$}
        \psfrag{b}[r][r][0.7]{$10^2$}
        \psfrag{c}[r][r][0.7]{$10^3$}
        \includegraphics[width=\textwidth]{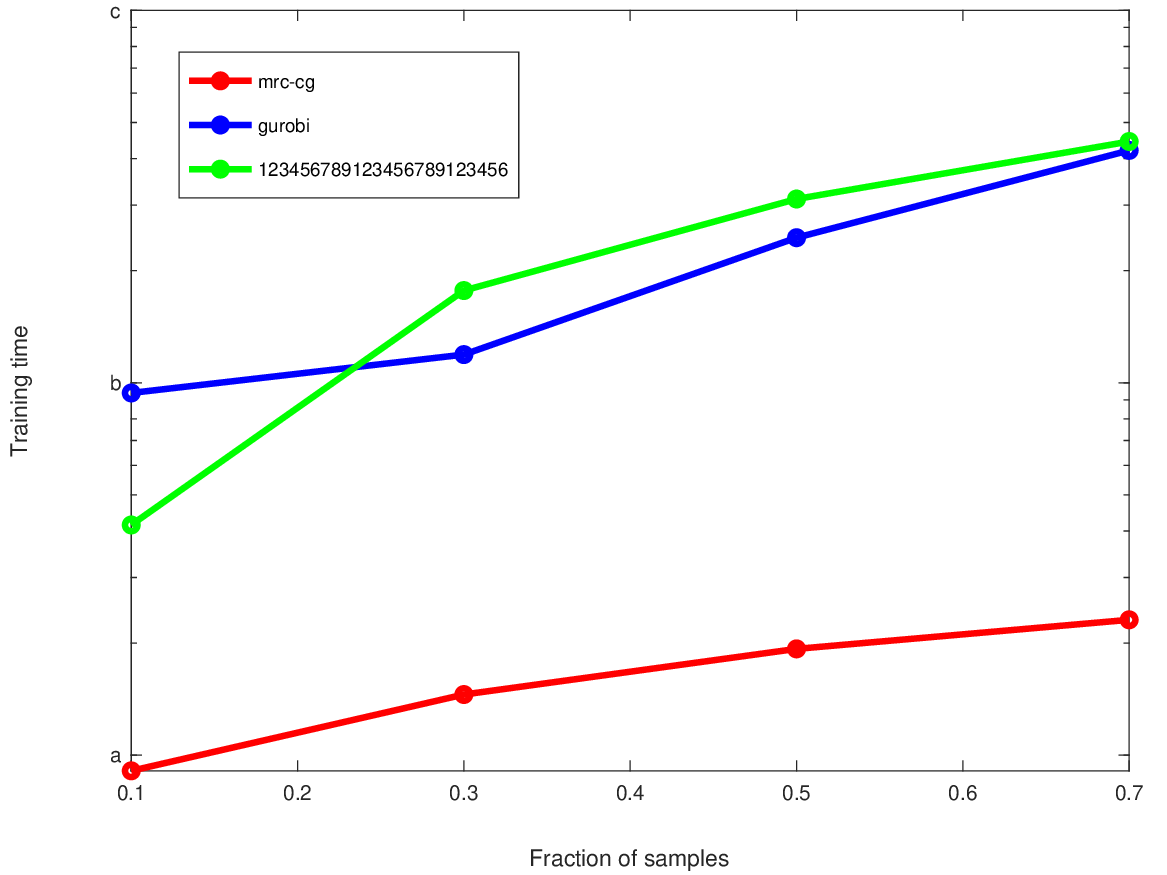}
        \captionsetup{labelfont={it}, font=small, skip=2pt}
        \caption{cats vs dogs dataset}
        \label{fig:scalability_cats}
    \end{subfigure}
    \begin{subfigure}[b]{0.31\textwidth}
        \centering
        \psfrag{Fraction of samples}[c][t][0.8]{Number of classes}
        \psfrag{Training time}[c][t][0.8]{}
        \psfrag{3}[c][c][0.7]{3}
        \psfrag{5}[c][c][0.5]{}
        \psfrag{7}[c][c][0.7]{7}
        \psfrag{9}[c][c][0.7]{}
        \psfrag{11}[c][c][0.7]{11}
        \psfrag{13}[c][c][0.5]{}
        \psfrag{15}[c][c][0.7]{15}
        \psfrag{17}[c][c][0.5]{}
        \psfrag{19}[c][c][0.7]{19}
        \psfrag{20}[c][c][0.5]{}
        \psfrag{a}[r][r][0.7]{$10^2$}
        \psfrag{b}[r][r][0.7]{$10^3$}
        \psfrag{c}[r][r][0.7]{$10^4$}

        \includegraphics[width=\textwidth]{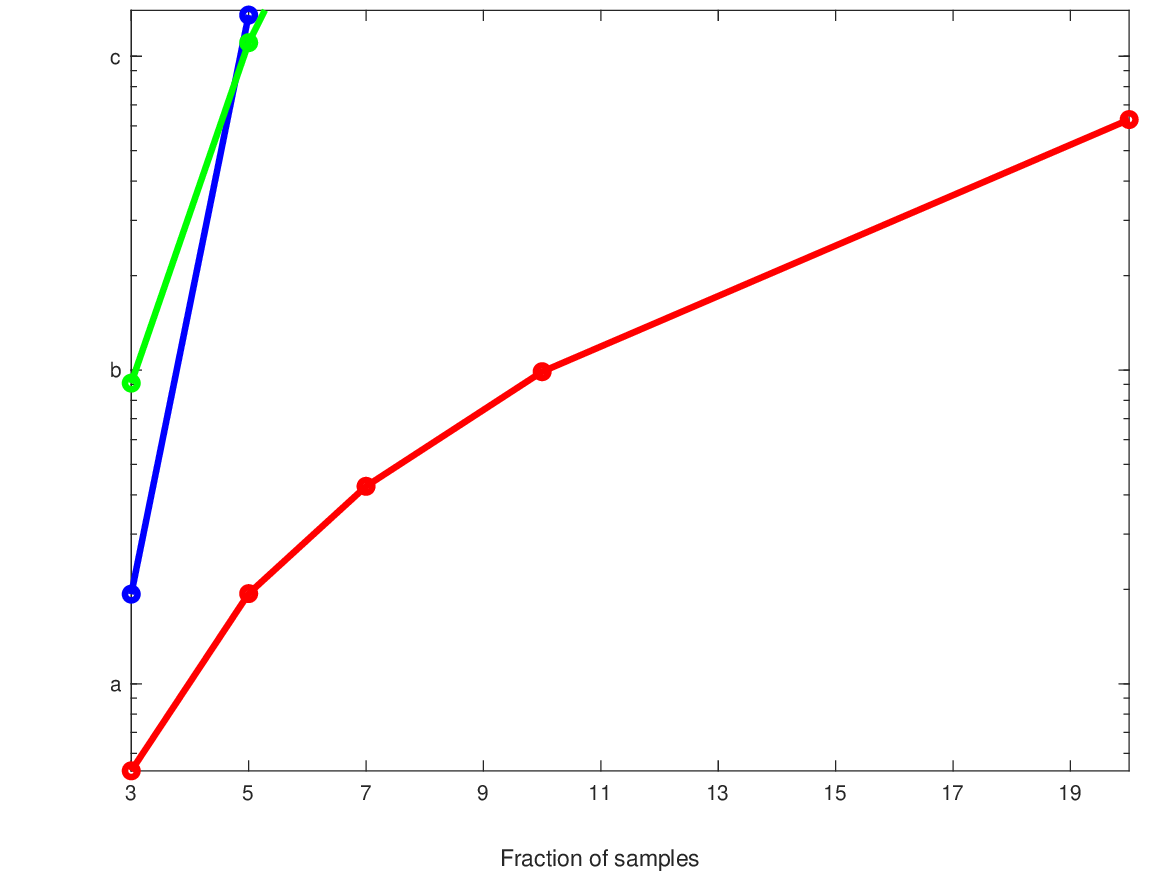}
        \captionsetup{labelfont={it}, font=small, skip=2pt}
        \caption{cifar100 dataset}
        \label{fig:scalability_cifar100}
    \end{subfigure}
    \begin{subfigure}[b]{0.31\textwidth}
        \centering
        \psfrag{Fraction of samples}[c][t][0.8]{Fraction of samples and features}
        \psfrag{Training time}[c][t][0.8]{}
        \psfrag{0}[r][r][0.5]{}
        \psfrag{0.1}[c][c][0.7]{0.1}
        \psfrag{0.2}[][][0.7]{}
        \psfrag{0.3}[c][c][0.7]{0.3}
        \psfrag{0.4}[][][0.7]{}
        \psfrag{0.5}[c][c][0.7]{0.5}
        \psfrag{0.6}[][][0.7]{}
        \psfrag{0.7}[c][c][0.7]{0.7}
        \psfrag{2000}[r][r][0.7]{}
        \psfrag{4000}[r][r][0.7]{}
        \psfrag{8000}[r][r][0.7]{}
        \psfrag{10000}[r][r][0.7]{}
        \psfrag{6000}[r][r][0.7]{$6000$}
        \psfrag{12000}[r][r][0.7]{$12000$}
        \psfrag{14000}[r][r][0.7]{}
        \includegraphics[width=\textwidth]{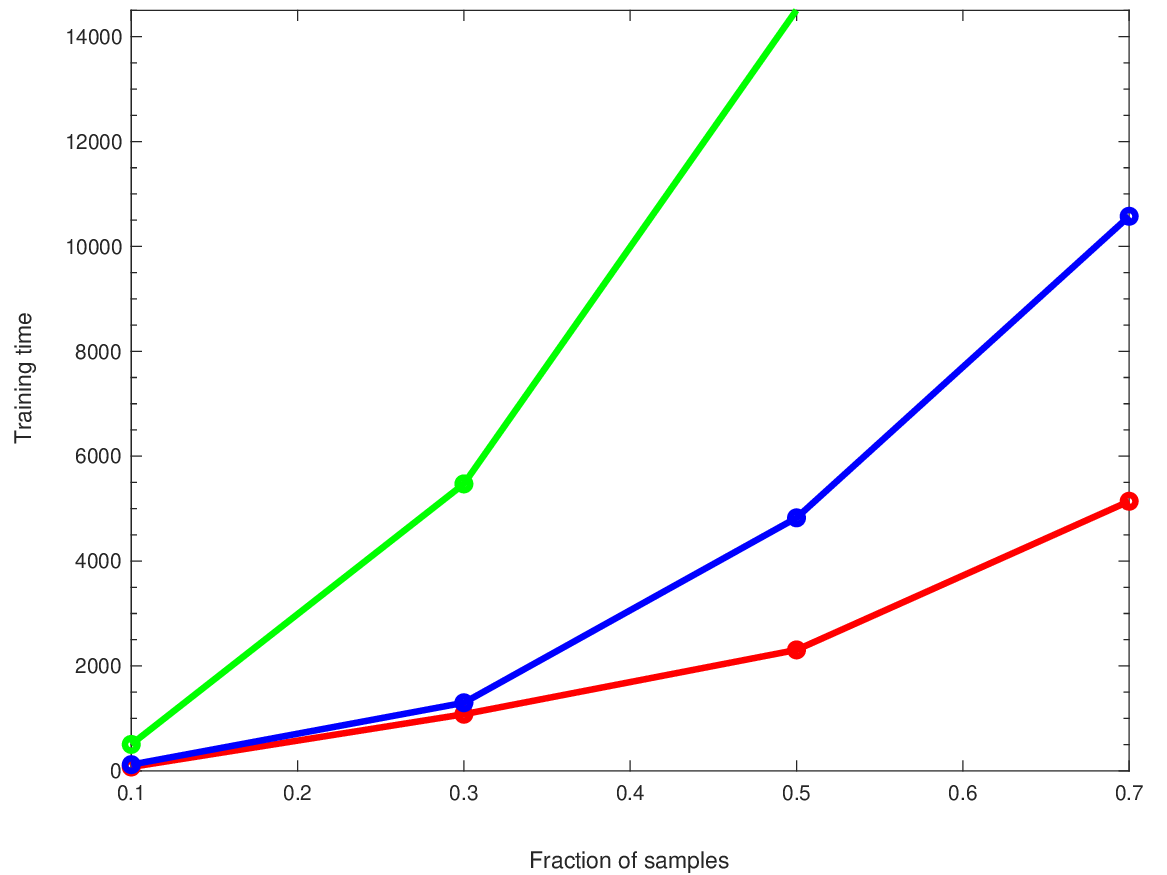}
        \captionsetup{labelfont={it}, font=small, skip=2pt}
        \caption{rcv1 dataset}
        \label{fig:scalability_rcv1}
    \end{subfigure}
    \captionsetup{labelfont={it}, labelsep=period, font=small, skip=2pt}
    \caption{Illustration of scalability for multiple scenarios of large-scale learning. The figures demonstrate that \mbox{MRC-CCG} achieves an improved scalability in comparison with state-of-the-art learning methods for \acp{MRC}. }
    \label{fig:scalability}
    \end{center}
    \vskip -0.3in
\end{figure*}

\vspace{-0.1cm}
\subsection{Fast convergence and improved scalability}
\vspace{-0.1cm}
Figure~\ref{fig:acc_vs_time} illustrates the fast convergence to $\up{R}^{*}$ achieved by the proposed iterative algorithm \mbox{MRC-CCG} in comparison with the other algorithms \mbox{MRC-SUB} and \mbox{MRC-LP}. The worst-case error $\up{R}^{k}$ is obtained along the iterations \mbox{$k=1,2,\ldots$} of each algorithm and the optimal worst-case error $\up{R}^{*}$ corresponds to the optimal worst-case error obtained by the \mbox{MRC-LP} that solves \eqref{eq:mrc_linear_primal} using all the constraints and variables.
Figure~\ref{fig:acc_vs_time_cats} and Figure~\ref{fig:acc_vs_time_satellite} correspond with a binary and multi-class classification problems with a large number of samples while Figure~\ref{fig:acc_vs_time_rcv1} corresponds to a binary classification problem with a large number of samples and features. All the times are averaged over 10 repetitions using 70\% of the data.
The figures show that \mbox{MRC-CCG} converges to $\up{R}^{*}$ significantly faster than the existing methods \mbox{MRC-SUB} and \mbox{MRC-LP} in all scenarios. Such improvement is even more clear for the multi-class scenario due to the complexity improvement from exponential to quasi-linear in the number of classes. Moreover, we observe that the difference in the worst-case error $\up{R}^{k} - \up{R}^{*}$ monotonically decreases with increasing time in case of large number of samples (Figure~\ref{fig:acc_vs_time_cats} and~\ref{fig:acc_vs_time_satellite}) due to the properties shown in Theorem~\ref{th:convergence_n_greater_than_m}.

Figure~\ref{fig:scalability} illustrates the improved scalability of \mbox{MRC-CCG} in comparison with the existing methods \mbox{MRC-SUB} and \mbox{MRC-LP}. Specifically, the figure shows the training times obtained for increasing the number of samples, the number of classes, and the number of both samples and features. All the training times are averaged over 10 random partitions of data. The figures show that \mbox{MRC-CCG} achieves a significantly more efficient learning of \acp{MRC} with increasing data and classes. In particular, Figure~\ref{fig:scalability_cifar100} shows that the algorithm achieves significant improvement in training times for multi-class setting and scales to a sizeable number of classes.

\vspace{-0.1cm}
\subsection{Training times and precision}
\vspace{-0.1cm}
\begin{table*}[ht]
     \captionsetup{labelfont={it}, labelsep=period, font=small, width=.85\textwidth, skip=0pt}
      \caption{Performance assessment in terms of training time (in secs) and average error (AE) for binary classification. }
      \vskip -0.2cm
      \label{tb:time_accuracy_binary}
\setstretch{1.2}
\begin{center}

\scalebox{0.8}{\begin{tabular}{|c|c|c|c|c|c|c|c|c|c|}
\hline
\multicolumn{1}{|c|}{\multirow{3}{*}{Dataset}} & \multicolumn{3}{c|}{\acp{SVM}} & \multicolumn{5}{c|}{\acp{MRC}} \\
 \cline{2-9}
 & \multicolumn{1}{c|}{LP} & \multicolumn{2}{c|}{CCG} & \multicolumn{1}{c|}{LP} & \multicolumn{2}{c|}{SUB} & \multicolumn{2}{c|}{CCG} \\
 \cline{2-9}
 & Time & Time & AE & Time & Time & AE & Time & AE \\
\hline
pulsar                                      & 1.1e+2 $\pm$ 1e+0 & 1.6e+2 $\pm$ 1e+1 & 5e-11  & 7.0e+1 $\pm$ 5e+0 &  2.5e+2 $\pm$ 7e+1 & 4e-3  & 7.2e+0 $\pm$ 1e+0 & 8e-5 \\
house16                                     & 1.3e+2 $\pm$ 1e+0 & 1.7e+2 $\pm$ 3e+0 & 1e-11  & 9.4e+1 $\pm$ 1e+0 &  3.2e+2 $\pm$ 8e+1 & 3e-3  & 7.1e+0 $\pm$ 4e+0 & 9e-5 \\
cats vs dogs                                & 1.7e+2 $\pm$ 8e+0 & 1.0e+2 $\pm$ 9e+0 & 2e-11  & 2.0e+2 $\pm$ 2e+1 &  4.0e+2 $\pm$ 1e+2 & 6e-4  & 2.9e+1 $\pm$ 5e+0 & 1e-5 \\
yearbook                                    & 2.8e+2 $\pm$ 1e+1 & 3.0e+2 $\pm$ 3e+1 & 1e-11  & 3.3e+2 $\pm$ 2e+1 &  6.2e+2 $\pm$ 1e+2 & 3e-4  & 3.8e+1 $\pm$ 5e+0 & 5e-6 \\
rcv1                                        & 5.6e+3 $\pm$ 2e+2 & 2.5e+2 $\pm$ 2e+1 & 1e-4   & 2.1e+4 $\pm$ 1e+3 &  3.6e+4 $\pm$ 8e+3 & 6e-2  & 8.1e+3 $\pm$ 2e+3 & 2e-3 \\
real\textunderscore sim                     & 9.4e+3 $\pm$ 2e+2 & 6.6e+3 $\pm$ 2e+3 & 4e-1   & 5.7e+4 $\pm$ 6e+3 &  6.2e+4 $\pm$ 2e+4 & 5e-2  & 1.9e+4 $\pm$ 8e+2 & 9e-4 \\
news20                                      & -                 & 2.1e+4 $\pm$ 5e+3 & -      & -                 &  -                 &  -    & 1.4e+5 $\pm$ 8e+3 & -    \\
\hline
\end{tabular}}
\end{center}
\vskip -0.6cm
\end{table*}

\begin{table}[ht]
 \captionsetup{labelfont={it}, labelsep=period, font=small, width=.47\textwidth, skip=0pt, belowskip=0pt}
                         \caption{Training times (in secs) for multi-class classification. }
    \vskip -0.2cm
     \label{tb:time_accuracy_multi}
\setstretch{1.2}
\begin{center}
\scalebox{0.8}{\begin{tabular}{|c|c|c|c|}
\hline
\multicolumn{1}{|c|}{\multirow{1}{*}{Dataset}} & \multicolumn{1}{c|}{SVM-MULTI} & \multicolumn{1}{c|}{MRC-SUB} & \multicolumn{1}{c|}{MRC-CCG} \\
 \cline{1-4}
satellite                                & 3.5e+2 $\pm$ 6e+0 &  4.3e+2 $\pm$ 1e+2 & 7.4e+1 $\pm$ 5e+0 \\
dry\textunderscore bean                  & 1.6e+2 $\pm$ 7e+0 &  2.5e+3 $\pm$ 1e+2 & 4.2e+1 $\pm$ 5e+0 \\
optidigits                               & 4.9e+2 $\pm$ 1e+1 &  5.8e+3 $\pm$ 4e+2 & 3.4e+2 $\pm$ 2e+1 \\
mnist                                    & 9.2e+4 $\pm$ 5e+4 &  1.8e+5 $\pm$ 4e+4 & 2.5e+3 $\pm$ 3e+2 \\
fashion\textunderscore mnist             & -                 &  1.9e+5 $\pm$ 2e+4 & 2.9e+3 $\pm$ 1e+2 \\
cifar10                                  & -                 &  1.2e+5 $\pm$ 2e+4 & 4.2e+3 $\pm$ 2e+2 \\
cifar100                                 & -                 &  -                 & 9.7e+4 $\pm$ 3e+3 \\
\hline
\end{tabular}}
\end{center}
\vskip -0.2cm
\end{table}

Table~\ref{tb:time_accuracy_binary} quantifies the improvement in training time and the precision in the objective value (in terms of average error (AE)) obtained by \mbox{MRC-CCG} with respect to \mbox{MRC-LP} and \mbox{MRC-SUB} using multiple binary classification datasets. All the results are averaged over 10 random repetitions using 80\% of the data and the AE in the objective value is computed as the difference with respect to the objective value obtained by solving the \ac{LP} \eqref{eq:mrc_linear_primal} using all the constraints and variables. In addition, the table presents results for the constraint generation methods for L1-\acp{SVM} (\mbox{SVM-CCG}) and the \ac{LP} formulation of L1-\acp{SVM} (SVM-LP). Notice that the large AE corresponding to \acp{SVM} represents the differences in large objective.

Table~\ref{tb:time_accuracy_multi} quantifies the improvement in training time by \mbox{MRC-CCG} with respect to \mbox{MRC-SUB}, and the L1-\acp{SVM} for multi-class classification (SVM-MULTI) since the efficient learning method (SVM-CCG) for L1-SVM is proposed for binary classification problems \citep{DedAntEtal:22}. Note that the empty cells in the table for \mbox{SVM-LP}, \mbox{SVM-MULTI}, \mbox{MRC-LP}, and \mbox{MRC-SUB} correspond to the cases where the method could not obtain a solution due to their requirements for computational resources. 

Table~\ref{tb:time_accuracy_binary} shows that the proposed algorithm \mbox{MRC-CCG} obtains accurate solutions with low AE (see Appendix~\ref{subsec:more_results} for the classification errors on each dataset) and achieves around a 10x speedup for datasets with a large number of samples (`pulsar', `house16', `cats vs dogs', and `yearbook'), and around a 2x speedup for datasets with a large number of samples and features (`rcv1', `real\textunderscore sim', and `news20'). In addition, Table~\ref{tb:time_accuracy_multi} shows that the algorithm enables the efficient learning of \acp{MRC} for multi-class settings and can achieve around a 100x speedup with increasing number of classes.

The experimental results show that the proposed algorithm \mbox{MRC-CCG} achieves faster convergence than existing methods with differences in the worst-case error probability lower than 1e-3. The algorithm enables to significantly improve the learning efficiency with large-scale data, especially for cases with large number of samples. In addition, the algorithm provides improved scalability upto a sizeable number of classes.

\vspace{-0.2cm}
\section{Conclusion}
\vspace{-0.2cm}
In this paper, we presented an algorithmic framework based on the combination of constraint and column generation for the efficient learning of \acfp{MRC} in general large-scale scenarios. 
The iterative algorithm utilizes a greedy constraint selection approach at each iteration that results in a complexity that scales quasi-linearly with the number of classes.
We present theoretical results that describe the convergence of the proposed algorithm for multiple scenarios of large-scale data. In particular, the results bound the difference in the worst-case error probability at each iteration.
The numerical results assess the efficiency of the proposed algorithm in comparison with state-of-the-art methods using multiple binary and multi-class datasets. 
The results show that the proposed algorithm provides a significant efficiency increase for large-scale data. Moreover, the algorithm achieves a significantly reduced computational complexity for multi-class classification tasks (upto 100x speedup) and enables scalability to a sizeable number of classes.

\section*{Acknowledgement}
\vspace{-0.2cm}
Funding in direct support of this work has been provided by
projects PID2022-137063NBI00, PLEC2024-011247, and CEX2021-001142-S funded
by MCIN/AEI/10.13039/501100011033 and the European Union
“NextGenerationEU”/PRTR, and program BERC-2022-2025 funded by the
Basque Government. Kartheek
Bondugula also holds a predoctoral grant (EJ-GV 2022) from the Basque Government.

\appendices
\vspace{-0.4cm}
\section{Proof of theorem 1}
\vspace{-0.2cm}
\label{subsec:proof_th1}
\begin{proof}
    At each iteration, the algorithm adds and removes constraints to the primal \ac{LP} \eqref{eq:mrc_linear_primal}, that is, variables to the dual \ac{LP} \eqref{eq:mrc_linear_dual}. The inequality \eqref{ineq:increasing_worst_case_risk} follows by noting that the warm-start to the dual at iteration $k + 1$ is a feasible solution with an objective value that equals the worst-case error probability $\up{R}^k$ at iteration $k$. In particular, the warm-start is feasible at $k + 1$ and has value $\up{R}^k$ because it is obtained from the solution at iteration $k$ by removing and adding features corresponding to zero coefficients.
    In the following, we prove the inequality \eqref{ineq:convergence_worst_case_risk_large_n}.

    The second inequality in \eqref{ineq:convergence_worst_case_risk_large_n} follows by noting that $\up{R}^{*}$ is the minimum value corresponding with all the constraints and will be greater than or equal to any optimal value corresponding with a subset of constraints. The first inequality in \eqref{ineq:convergence_worst_case_risk_large_n} is obtained as follows.
    
    
    Let $\B{\mu}_1^{k_0}, \B{\mu}_2^{k_0} \in \mathbb{R}^{m}$, $\nu^{k_0} \in \mathbb{R}$ be the primal solution at iteration $k_0$ for a subset of primal constraints $\set{I}$, then we have
    \begin{align}
        \label{eq:relaxed_constraints_primal}
        \V{F}(\B{\mu}_1^{k_0} - \B{\mu}_2^{k_0}) - \nu^{k_0}\B{1} \preceq \V{b} + \hat{\epsilon}_1\B{1}
    \end{align}
    since $\hat{\epsilon}_1$ is the largest violation in the constraints of the primal. Now, consider the following dual problem corresponding to the primal \eqref{eq:mrc_linear_primal} with constraints as in \eqref{eq:relaxed_constraints_primal}.
    \begin{align}
    \label{eq:mrc_linear_dual_relaxed}
    \def\arraystretch{1}
    \begin{array}{cc}
    \underset{\B{\alpha} \in \mathbb{R}^p, \B{\alpha} \succeq 0}{\max} & -(\B{\up{b}} + \hat{\epsilon}_1\B{1})^{\up{T}}\B{\alpha} \\
    \text{s.t.} & \B{\tau} - \B{\lambda} \ \preceq \ \B{\up{F}}^{\up{T}}\B{\alpha} \ \preceq  \ \B{\tau} + \B{\lambda}, \ \B{1}^{\up{T}}\B{\alpha} = 1.
    \end{array}
    \end{align}
    If $\up{R}^{*}$ is the optimal solution of the \ac{MRC} \ac{LP} \eqref{eq:mrc_linear_primal} using all the constraints, then the corresponding dual solution $\B{\alpha}^{*}$ is also a feasible solution for the dual \ac{LP} \eqref{eq:mrc_linear_dual_relaxed}. Therefore, by weak duality (\citep[Theorem 4.3]{DimTsi97}), we have that
    \begin{align}
        \label{eq:weak_duality_eps_1}
        \nonumber -(\B{\up{b}} + \hat{\epsilon}_1\B{1})^{\up{T}}\B{\alpha}^{*} \leq \ & (\B{\lambda} - \B{\tau})^{\up{T}}\B{\mu}^{k_0}_1 \\ & + (\B{\tau} + \B{\lambda})^{\up{T}}\B{\mu}^{k_0}_2 + \nu^{k_0}
 = \up{R}^{k_0}
    \end{align}
    since $\B{\mu}_1^{k_0}, \B{\mu}_2^{k_0}$, and $\nu^{k_0}$ is a feasible solution for the primal \ac{LP} corresponding with relaxed constraints in \eqref{eq:relaxed_constraints_primal}. Therefore, the first inequality in \eqref{ineq:convergence_worst_case_risk_large_n} is obtained because
    \begin{align}
        \label{eq:final_inequality_th1}
        \up{R}^{*} - \hat{\epsilon}_1\B{1}^{\up{T}}\B{\alpha}^{*} = \up{R}^{*} - \hat{\epsilon}_1 \leq \ & \up{R}^{k_0}
    \end{align}
   since $\B{1}^{\up{T}}\B{\alpha}^{*} = 1$. Hence, the first inequality of \eqref{ineq:convergence_worst_case_risk_large_n} holds for any $k \geq k_0$ since $\up{R}^{k} \geq \up{R}^{k_0}$ due to the monotonic increase of the worst-case error probability. 
\end{proof}

\vspace{-0.3cm}
\section{Proof of theorem 2}
\vspace{-0.2cm}
\label{subsec:proof_th2}
\begin{proof}
   The first inequality follows from Theorem~\ref{th:convergence_n_greater_than_m}, based on the maximum primal constraint violation $\hat{\epsilon}_1$ at iteration $k$.

    We now prove the second inequality by applying weak duality \citep{DimTsi97}. Let $\B{\mu}_1^k, \B{\mu}_2^k \in \mathbb{R}^m$, $\nu^k \in \mathbb{R}$ be the primal solution, and $\B{\alpha}^k \in \mathbb{R}^p$ the dual solution for selected constraints $\set{I}$ and features $\set{J}$. Since $\hat{\epsilon}_2$ is the maximum violation in the dual constraints, we have:
    \begin{align}
        \label{eq:relaxed_constraints_dual}
        \B{\tau} - \B{\lambda} - \hat{\epsilon}_2\B{1} \ \preceq \ \B{\up{F}}^{\up{T}}\B{\alpha}^{k} \ \preceq  \ \B{\tau} + \B{\lambda} + \hat{\epsilon}_2\B{1}
    \end{align}
    Now, we consider the corresponding relaxed primal problem. The optimal \ac{MRC} solution $(\B{\mu}_1^*, \B{\mu}_2^*, \nu^*)$ remains feasible under this relaxation. Then, by weak duality:
    \begin{align*}
        -\V{b}^\up{T}\B{\alpha}^k \leq(\B{\lambda} + \hat{\epsilon}_2\B{1} - \B{\tau})^{\up{T}}\B{\mu}^{*}_{1} + (\B{\tau} + \B{\lambda} + \hat{\epsilon}_2\B{1})^{\up{T}}\B{\mu}^{*}_{2} + \nu^*
    \end{align*}
    which implies $
        \up{R}^k \leq \up{R}^* + \hat{\epsilon}_2\|\B{\mu}^*\|$
    since $\B{\mu}^*_1 = (\B{\mu}^*)_+$ and $\B{\mu}^*_2 = (-\B{\mu}^*)_+$.
\end{proof}

\vspace{-0.7cm}
\section{Hyper-parameter tuning}
\vspace{-0.2cm}
\label{subsec:hyper_tuning}
In the following, we present results that analyze the effect of the hyper-parameters $\epsilon_1$, $\epsilon_2$, $n_\text{max}$, and $m_\text{max}$ on the efficiency of Algorithm~\ref{alg:efficient_mrc_for_large_n_m}.

\vspace{-0.2cm}
\subsection{Effect of $\epsilon_1$ and $\epsilon_2$}
\vspace{-0.1cm}
Hyper-parameters $\epsilon_1$ and $\epsilon_2$ determine the threshold for the primal and dual constraints' violations, that is, the proposed algorithm only selects constraints that have a higher violation than the threshold in each iteration. Therefore, $\epsilon_1$ and $\epsilon_2$ influence the total number of constraints and features selected, and thereby, the overall time taken by the algorithm. Moreover, Theorem~\ref{th:convergence_of_cg_cp} shows that $\epsilon_1$ has direct effect on the accuracy of the solution while the effect of $\epsilon_2$ depends on the sparsity in the coefficient $\B{\mu}^*$. Table~\ref{tb:eps_assess} shows the trade-off between the time and accuracy of the algorithm for different values of the hyper-parameters $\epsilon_1$ and $\epsilon_2$. The table presents the average relative error (ARE), the classification error, and the training time obtained by the algorithm using the datasets "rcv1" and "real\textunderscore sim" with large number of samples and features. The results are obtained for the hyper-parameters \mbox{$n_{\up{max}}=m_{\up{max}}=400$}. We observe that smaller values of the hyper-parameter can obtain very accurate solutions while taking more time. On average, a good trade-off between accuracy and time is obtained for $\epsilon_1=$1e-2 and $\epsilon_2=$1e-5.
\begin{table}[h]
\vskip -0.1cm
 \captionsetup{labelfont={it}, labelsep=period, font=small}
                         \caption{Performance assessment for different values of $\epsilon_1$ and $\epsilon_2$.}
    \vskip -0.15in
     \label{tb:eps_assess}
\setstretch{1.1}
\begin{center}
\scalebox{0.8}{\begin{tabular}{|c|c|c|c|c|c|c|c|}
\hline
\multicolumn{1}{|c|}{\multirow{3}{*}{$\epsilon_1$}} & \multicolumn{1}{|c|}{\multirow{3}{*}{$\epsilon_2$}} & \multicolumn{6}{|c|}{\multirow{1}{*}{Dataset}} \\
\cline{3-8}
 & & \multicolumn{3}{c|}{rcv1} & \multicolumn{3}{c|}{real\textunderscore sim} \\
 \cline{3-8}
 & & Time & Error & AE & Time & Error & AE \\
 \hline
1e-2           & 1e-3          & 9.8e+1          & .07           & 8.5e-2          & 2.7e+2          & .24          & 4.7e-2          \\
1e-2           & 1e-4          & 3.6e+3          & .05           & 1.0e-2          & 5.6e+3          & .16          & 1.0e-2          \\
\textbf{1e-2}  & \textbf{1e-5} & \textbf{9.6e+3} & \textbf{.04}  & \textbf{2.0e-3} & \textbf{2.0e+4} & \textbf{.13} & \textbf{9.2e-4} \\
1e-3           & 1e-3          & 9.4e+1          & .07           & 8.5e-2          & 3.1e+2          & .24          & 4.7e-2          \\
1e-3           & 1e-4          & 3.0e+3          & .05           & 2.1e-2          & 6.5e+3          & .16          & 1.1e-2          \\
1e-3           & 1e-5          & 1.2e+4          & .04           & 2.2e-3          & 2.5e+4          & .13          & 9.1e-4          \\
1e-4           & 1e-3          & 1.0e+2          & .07           & 8.4e-2          & 3.6e+2          & .24          & 4.6e-2          \\
1e-4           & 1e-4          & 3.3e+3          & .05           & 2.0e-2          & 8.8e+3          & .16          & 1.1e-2          \\
1e-4           & 1e-5          & 8.8e+3          & .04           & 2.2e-3          & 2.9e+4          & .13          & 9.3e-4          \\
\hline
\end{tabular}}
\end{center}
\end{table}

\vspace{-0.5cm}
\subsection{Effect of $n_\text{max}$ and $m_\text{max}$}
Hyper-parameters $n_\text{max}$ and $m_\text{max}$ determine the maximum number of constraints and features selected by the proposed algorithm in each iteration. Therefore, $n_\text{max}$ and $m_\text{max}$ have an effect on the complexity per iteration and the total number of iterations of the proposed algorithm. Particularly, increasing the value of $n_\text{max}$ or $m_\text{max}$ decreases the number of iterations for convergence at the expense of increasing the complexity per iteration. In Figure~\ref{fig:nmax}, we present results that show the effect of $n_\text{max}$ and $m_\text{max}$ on the overall computational complexity of the algorithm on datasets `real\textunderscore sim' and `rcv1'. We set $\epsilon_1=$1e-2 and $\epsilon_2=$1e-5 to obtain accurate results. In practice, we observe that $300 < n_\text{max} = m_\text{max} < 1500$ obtains a good compromise between the number of iterations required for convergence and the complexity per iteration. All the numerical results in the main paper are obtained for $n_\text{max} = m_\text{max} = 400$.
\vspace{-0.3cm}
\begin{figure}[h]
    \centering
    \psfrag{nmax}[c][t][0.7]{\shortstack{Number of constraints and features selected \\ $n_\text{max}$ and $m_\text{max}$}}
    \psfrag{Training time}[c][t][0.7]{Time in hours}
    \psfrag{1234567891234567}[l][l][0.8]{rcv1}
    \psfrag{realsim}[l][l][0.8]{real\textunderscore sim}
    \psfrag{2}[c][c][0.8]{2}
    \psfrag{0}[c][c][0.8]{}
    \psfrag{a}[r][r][0.8]{$5.5$}
    \psfrag{b}[r][r][0.8]{$16.5$}
    \psfrag{c}[r][r][0.8]{$27.5$}
    \psfrag{500}[r][r][0.8]{500}
    \psfrag{1000}[r][r][0.8]{1000}
    \psfrag{1500}[r][r][0.8]{1500} 
    \includegraphics[width=0.44\textwidth]{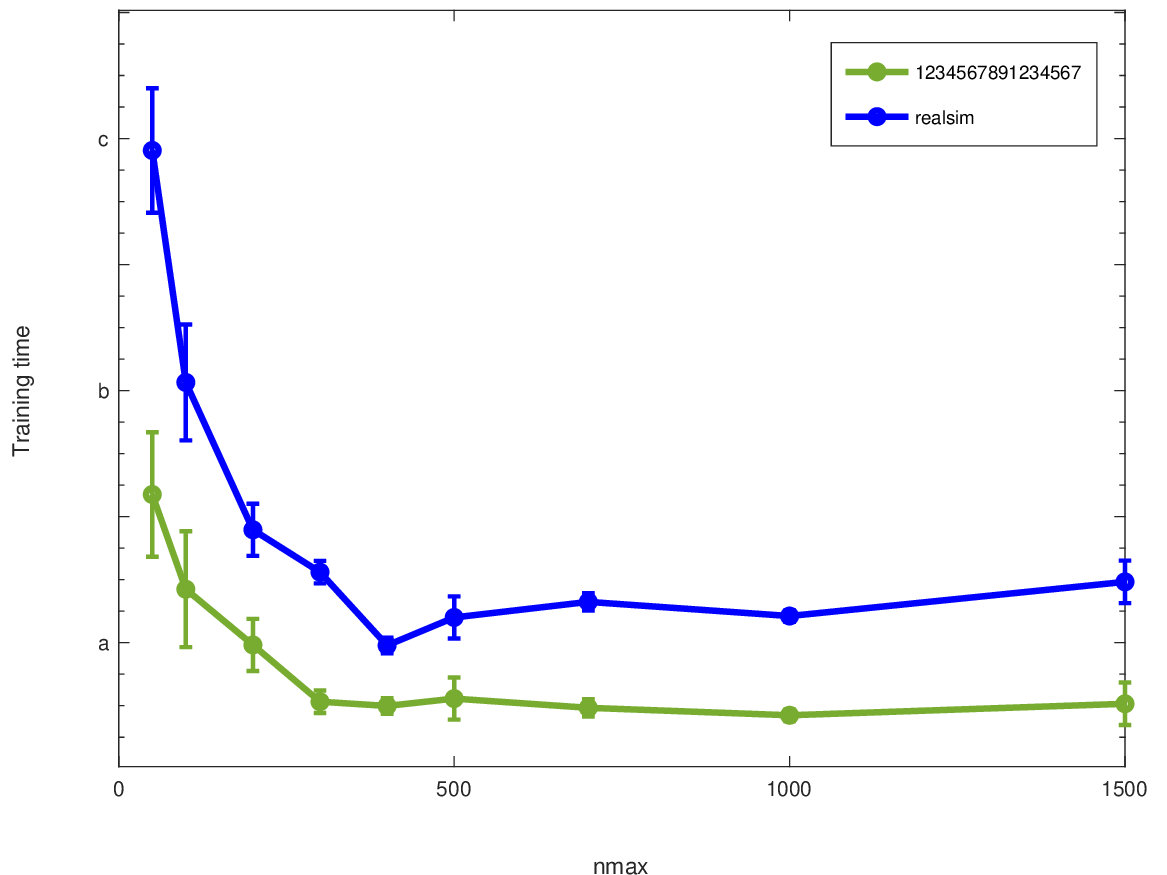}
    \captionsetup{labelfont={it}, labelsep=period, font=small}
    \caption{Effect of hyper-parameters $n_\text{max}$ and $m_\text{max}$ on training time.}
    \label{fig:nmax}
\end{figure}
\vspace{-0.4cm}

\vspace{-0.1cm}
\section{Classification errors along with training times on multiple benchmark datasets}
\label{subsec:more_results}
In this section, we present results to show that the proposed algorithm MRC-CCG enables efficient learning while achieving classification errors similar to the state-of-the-art. Table~\ref{tb:time_error_binary} presents results corresponding to binary classification datasets while Table~\ref{tb:time_error_multi} presents results corresponding to multi-class classification datasets. Moreover, the results show that the proposed algorithm achieves competitive classification errors compared to the methods based on \acp{SVM}. In addition, the results present the worst-case error error probability obtained by the methods for \acp{MRC} that can serve to provide an upper bound on the classification error.
\begin{table*}[h]
 \captionsetup{labelfont={it}, labelsep=period, font=small}
                         \caption{Training time (in secs) and classification error (CE) for binary classification. }
    \vskip -0.15in
     \label{tb:time_error_binary}
\setstretch{1.2}
\begin{center}
\scalebox{0.8}{\begin{tabular}{|c|c|c|c|c|c|c|c|c|c|c|c|c|c|}
\hline
\multicolumn{1}{|c|}{\multirow{3}{*}{Dataset}} & \multicolumn{4}{c|}{\acp{SVM}} & \multicolumn{9}{c|}{\acp{MRC}} \\
 \cline{2-14}
 & \multicolumn{2}{c|}{LP} & \multicolumn{2}{c|}{CCG} & \multicolumn{3}{c|}{LP} & \multicolumn{3}{c|}{SUB} & \multicolumn{3}{c|}{CCG} \\
 \cline{2-14}
& Time & CE & Time & CE & Time & CE & $\up{R}^*$ & Time & CE & $\up{R}^{\text{SUB}}$ & Time & CE & $\up{R}^{\text{CCG}}$\\
\hline
pulsar                     & 1.1e+2 & .02 & 1.6e+2 & .02 & 7.0e+1 &  .04   & .06 & 2.5e+2 & .05 & .06 & 7.2e+0 & .04 & .06 \\
house16                    & 1.3e+2 & .25 & 1.7e+2 & .25 & 9.4e+1 &  .26   & .27 & 3.2e+2 & .27 & .28 & 7.1e+0 & .26 & .27 \\
cats vs dogs               & 1.7e+2 & .02 & 1.0e+2 & .02 & 2.0e+2 &  .01   & .21 & 4.0e+2 & .01 & .21 & 2.9e+1 & .01 & .21 \\
yearbook                   & 2.8e+2 & .07 & 3.0e+2 & .07 & 3.3e+2 &  .11   & .31 & 6.2e+2 & .11 & .31 & 3.8e+1 & .11 & .31 \\
rcv1                       & 5.6e+3 & .03 & 2.5e+2 & .03 & 2.1e+4 &  .04   & .19 & 3.6e+4 & .05 & .25 & 8.1e+3 & .04 & .19 \\
real\textunderscore sim    & 9.4e+3 & .04 & 6.6e+3 & .04 & 5.7e+4 &  .12   & .20 & 6.2e+4 & .25 & .25 & 1.9e+4 & .13 & .20 \\
news20                     & -      & -   & 2.1e+4 & .07 & -      &  -     & -   &   -    & -   &  -  & 1.4e+5 & .04 & .17 \\
\hline
\end{tabular}}
\end{center}
\vskip -0.25in
\end{table*}
\begin{table}[h]
 \captionsetup{labelfont={it}, labelsep=period, font=small}
                         \caption{Training times (in secs) and classification error (CE) for multi-class classification. }
    \vskip -0.15in
     \label{tb:time_error_multi}
\setstretch{1.1}
\begin{center}
\scalebox{0.8}{\begin{tabular}{|c|c|c|c|c|c|c|c|c|}
\hline
\multicolumn{1}{|c|}{\multirow{2}{*}{Dataset}} & \multicolumn{2}{c|}{SVM-MULTI} & \multicolumn{3}{c|}{MRC-SUB} & \multicolumn{3}{c|}{MRC-CCG} \\
 \cline{2-9}
 & Time & CE & Time & CE & $\up{R}^{\text{SUB}}$ & Time & CE & $\up{R}^{\text{CCG}}$ \\
 \hline
satellite                        & 3.5e+2 & .12 & 4.3e+2 & .17 & .32 & 7.4e+1 & .15 & .26 \\
dry\textunderscore bean          & 1.6e+2 & .44 & 2.5e+3 & .57 & .57 & 4.2e+1 & .40 & .49 \\
optdigits                        & 4.9e+2 & .03 & 5.8e+3 & .06 & .37 & 3.4e+2 & .02 & .22 \\
mnist                            & 9.2e+4 & .06 & 1.8e+5 & .05 & .44 & 2.5e+3 & .05 & .41 \\
fashion                          & -      &  -  & 1.9e+5 & .90 & .90 & 2.9e+3 & .17 & .46 \\
cifar\textunderscore 10          & -      &  -  & 1.2e+5 & .90 & .90 & 4.2e+3 & .17 & .57 \\
cifar\textunderscore 100         & -      &  -  & -      & -   & -   & 9.7e+4 & .55 & .66 \\
\hline
\end{tabular}}
\end{center}
\vskip -0.1cm
\end{table}
\vspace{-0.1cm}
\bibliography{main}

\end{document}